\newcommand{\cmark}{\ding{51}}%
\newcommand{\xmark}{\ding{55}}%
\newcommand{\Cat}{\operatorname{Cat}}
\newcommand{\Dir}{\operatorname{Dir}}
\newcommand{\Lip}{\operatorname{Lip}}
\newcommand{\ReLU}{\operatorname{ReLU}}
\newcommand{\drop}{\operatorname{drop}}
\newcommand{\epochs}{\operatorname{epochs}}
\newcommand{\prior}{\operatorname{prior}}
\newcommand{\data}{\operatorname{data}}
\newcommand{\post}{\operatorname{post}}
\newcommand{\id}{\operatorname{id}}
\newcommand{\ood}{\operatorname{ood}}
\newcommand{\tr}{\operatorname{tr}}
\newcommand{\Clip}{\operatorname{Clip}}
\newcommand{\Softplus}{\operatorname{Softplus}}
\theoremstyle{plain}
\newtheorem{theorem}{Theorem}[section]
\newtheorem{lemma}[theorem]{Lemma}
\newtheorem{corollary}[theorem]{Corollary}
\theoremstyle{definition}
\newtheorem{definition}[theorem]{Definition}
\newtheorem{assumption}[theorem]{Assumption}
\theoremstyle{remark}
\icmltitlerunning{Uncertainty Estimation by Density Aware Evidential Deep Learning}
\begin{document}

\twocolumn[
\icmltitle{Uncertainty Estimation by Density Aware Evidential Deep Learning}

% It is OKAY to include author information, even for blind
% submissions: the style file will automatically remove it for you
% unless you've provided the [accepted] option to the icml2024
% package.

% List of affiliations: The first argument should be a (short)
% identifier you will use later to specify author affiliations
% Academic affiliations should list Department, University, City, Region, Country
% Industry affiliations should list Company, City, Region, Country

% You can specify symbols, otherwise they are numbered in order.
% Ideally, you should not use this facility. Affiliations will be numbered
% in order of appearance and this is the preferred way.
\icmlsetsymbol{equal}{*}

\begin{icmlauthorlist}
\icmlauthor{Taeseong Yoon}{yyy}
\icmlauthor{Heeyoung Kim}{yyy}
\end{icmlauthorlist}
\icmlaffiliation{yyy}{Department of Industrial and Systems Engineering, KAIST, Daejeon, Republic of Korea}
\icmlcorrespondingauthor{Heeyoung Kim}{heeyoungkim@kaist.ac.kr}

% You may provide any keywords that you
% find helpful for describing your paper; these are used to populate
% the "keywords" metadata in the PDF but will not be shown in the document
\icmlkeywords{Machine Learning, ICML}

\vskip 0.3in
]

% this must go after the closing bracket ] following \twocolumn[ ...

% This command actually creates the footnote in the first column
% listing the affiliations and the copyright notice.
% The command takes one argument, which is text to display at the start of the footnote.
% The \icmlEqualContribution command is standard text for equal contribution.
% Remove it (just {}) if you do not need this facility.

\printAffiliationsAndNotice{}  % leave blank if no need to mention equal contribution
%\printAffiliationsAndNotice{\icmlEqualContribution} % otherwise use the standard text.
\begin{abstract}
Evidential deep learning (EDL) has shown remarkable success in uncertainty estimation. However, there is still room for improvement, particularly in out-of-distribution (OOD) detection and classification tasks. The limited OOD detection performance of EDL arises from its inability to reflect the distance between the testing example and training data when quantifying uncertainty, while its limited classification performance stems from its parameterization of the concentration parameters. To address these limitations, we propose a novel method called \textit{Density Aware Evidential Deep Learning (DAEDL)}. DAEDL integrates the feature space density of the testing example with the output of EDL during the prediction stage, while using a novel parameterization that resolves the issues in the conventional parameterization. We prove that DAEDL enjoys a number of favorable theoretical properties. DAEDL demonstrates state-of-the-art performance across diverse downstream tasks related to uncertainty estimation and classification. 
\end{abstract}

\section{Introduction}

\label{Introduction}
% [1] AI에서 uncertainty quantification의 필요성 + [2] 기존 UQ 방법론에 대한 소개 & 문제점
In recent years, artificial intelligence (AI) has achieved remarkable advancements, demonstrating state-of-the-art performance across various domains. Despite these significant strides, applying AI models to real-world problems poses challenges. Relying solely on AI models for critical decision-making is considered risky. To safely deploy AI models in high-risk domains such as healthcare, finance, and manufacturing, they must possess the capability to represent the uncertainty of their outcomes accurately. However, it is widely acknowledged that modern neural networks often lack proper calibration, and struggle to precisely represent the uncertainty associated with their predictions \cite{guo2017calibration}. Various methods, including deep ensemble \cite{lakshminarayanan2017simple}, Monte Carlo dropout \cite{gal2016dropout}, and Bayesian neural networks \cite{blundell2015weight}, have been proposed to quantify the uncertainty of AI models. These methods exhibit reasonable performance underpinned by a solid theoretical foundation. Nevertheless, their practical applicability in real-world settings is hindered by the necessity of multiple forward passes, making them less feasible. To address this limitation, researchers have explored models with the capability to quantify uncertainty in a single forward pass, aiming to enhance the practical applicability of uncertainty estimation models for real-world problems.

% [3] Evidential Deep Learning (EDL)에 대한 설명
Dirichlet-based uncertainty (DBU) models \cite{malinin2018predictive, sensoy2018evidential,malinin2019reverse, charpentier2020posterior, charpentier2021natural, ulmer2023prior} have emerged as a promising avenue among models capable of quantifying uncertainty in a single forward pass. Unlike conventional classification models that directly predict class probabilities, DBU models adopt a distinctive approach by predicting the distribution of class probabilities. 
Evidential deep learning (EDL) \cite{sensoy2018evidential}, which employs a Dirichlet distribution to simultaneously quantify belief mass for each class and uncertainty mass, stands out as a prominent example of the DBU models. EDL is distinguished for its simplicity in implementation and impressive uncertainty estimation performance in various tasks, especially in out-of-distribution (OOD) detection \cite{sensoy2018evidential, deng2023uncertainty}. 

% [4] EDL의 한계점
Despite EDL's notable success, there is still room for improvement. First, EDL may fail to accurately reflect the distance between the testing examples and training data when quantifying predictive uncertainty, potentially resulting in a decline in OOD detection performance. 
To shed light on our hypothesis, we conducted a toy experiment using a two moons dataset \cite{liu2020simple}. In \cref{Fig1}, uncertainty representations (predictive variance) obtained by Softmax and EDL are depicted in (a) and (b) along with the training data of two classes (blue and orange) and OOD data (red). 
The ideal uncertainty estimation model should yield low predictive uncertainty for testing examples that are near the training data, with the uncertainty increasing as they move farther away from the training data. 
However, as illustrated in \cref{Fig1}, Softmax and EDL exhibit high uncertainty (yellow) primarily along the decision boundary and low uncertainty (purple) elsewhere. Notably, Softmax and EDL assign low uncertainty even to OOD data. 
In contrast, our proposed method in (c), which will be detailed below, yielded the desired uncertainty estimation results. 
Second, EDL exhibits limited classification performance, restricting its suitability for real-world problems, where classification typically takes precedence over uncertainty estimation. 
We hypothesize that this limitation arises from the conventional parameterization of EDL, specifically the challenge of estimating an appropriate magnitude of the {\it evidence} in relation to the concentration parameters of the Dirichlet distribution,  due to the absence of an explicit range for the magnitude of the evidence. 

% [5] Contribution (1) (= Density Aware Evidential Deep Learning (DAEDL)) 에 대한 설명 
To address these limitations, we propose a novel method called \textit{Density Aware Evidential Deep Learning (DAEDL)}. 
First, to enable an uncertainty estimate that reflects the distance between the testing examples and training data, DAEDL integrates the feature space density of the testing example with the output of EDL during the prediction stage. For the density estimation, DAEDL employs Gaussian discriminant analysis (GDA) in the feature space, inspired by \citet{mukhoti2023deep}, which allows it to estimate the density without additional training. Notably, the feature space encapsulates relevant features essential for both uncertainty estimation and classification, whereas direct density estimation in the input space is computationally demanding and susceptible to the curse of dimensionality \cite{choi2018generative, nalisnick2019detecting}. 
Second, to overcome the potential limitation of the conventional parameterization of EDL, DAEDL introduces an alternative novel parameterization that resolves the issues arising from the lack of an explicit range for the evidence. Additionally, DAEDL adopts an exponential activation function, in contrast to ReLU in EDL, which allows it to establish a connection with the softmax model, thereby further enhancing classification accuracy. 

\begin{figure}[htbp]
%\begin{center}
\centerline{\includegraphics[width=\columnwidth]{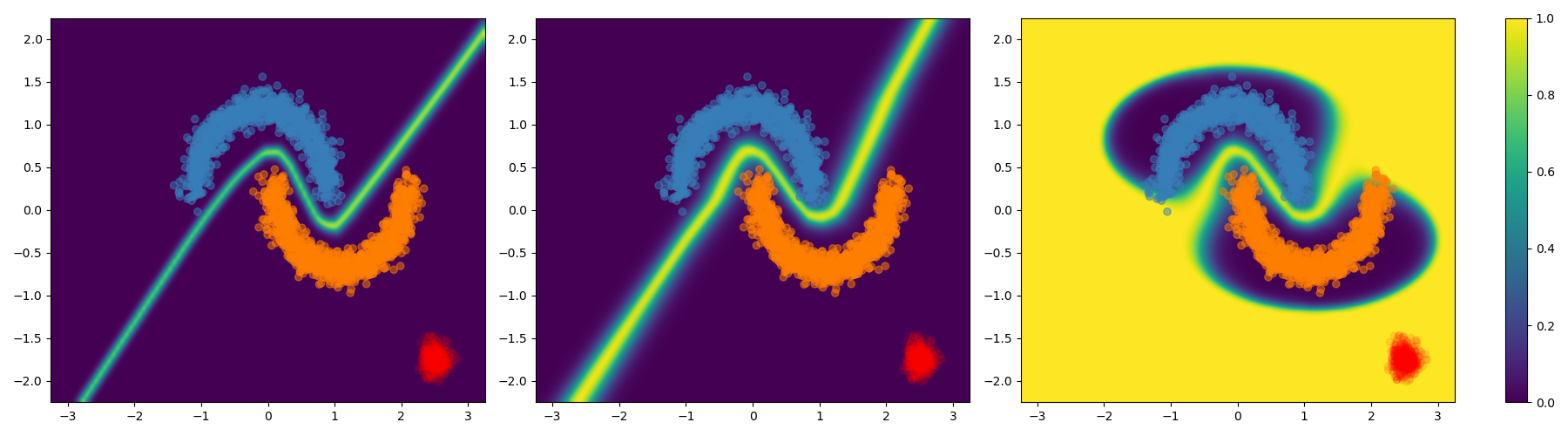}}
%\end{center}
\begin{small}
\quad \quad \textbf{(a) Softmax} \quad \qquad \textbf{(b) EDL} \qquad \quad \ \ \textbf{(c) DAEDL (ours)}
\end{small}
\caption{Uncertainty representations on the two moons dataset. (a) Softmax, (b) EDL, and (c) DAEDL (ours).}
\label{Fig1}
\end{figure}

% [5-2] Contribution (2) (= Theoretical Advancements of DAEDL) 에 대한 설명
We establish that DAEDL exhibits favorable theoretical properties, elucidating the reasons for its superior uncertainty estimation performance over conventional EDL. First, we prove that DAEDL generates a uniform predictive distribution over classes for OOD data. Second, we prove that the predictive distribution of DAEDL can be conceptualized as an \textit{adaptive temperature scaled} softmax model, which has demonstrated effective performance in improving model calibration and OOD detection \cite{balanya2022adaptive, joy2023sample, krumpl2024ats}. Third, we prove that DAEDL can be interpreted as predicting an \textit{input-dependent posterior distribution of the Dirichlet-Categorical model} \cite{charpentier2020posterior} with an improper prior $\boldsymbol{\pi} \sim \Dir(\mathbf{0})$, while typical DBU models \cite{sensoy2018evidential, charpentier2020posterior, charpentier2021natural, deng2023uncertainty} can be seen as utilizing a uniform prior $\boldsymbol{\pi} \sim \Dir (\mathbf{1})$ under this framework. This interpretation implies that DAEDL employs an improper prior in a Bayesian context, thereby addressing the challenge of specifying an appropriate prior distribution. Fourth, we prove that DAEDL's predictive uncertainty of a testing example is proportional to its distance from the training data manifold, under mild assumptions. This property, formally defined as \textit{distance awareness} \cite{liu2020simple}, has been demonstrated to enhance the quality of uncertainty estimation. 

DAEDL consistently demonstrates state-of-the-art performance across various downstream tasks related to uncertainty estimation, including OOD detection, confidence calibration, and distribution shift detection, as well as achieving superior performance in image classification.

\section{Evidential Deep Learning}
\label{Evidential Deep Learning}
EDL \cite{sensoy2018evidential} stands as one of the pioneering works in the class of DBU models. The development of EDL is grounded in the principles of subjective logic (SL) \cite{josang1997artificial, josang2016subjective} and Dempster-Shafer Theory of Evidence (DST) \cite{dempster1968generalization, shafer1976mathematical}. In a classification problem with $C$ classes, DST assigns a belief mass $b_{c}, \ \forall c \in [C]$ for each class, which measures the evidence in favor of each class, and the uncertainty mass $u$, which captures the overall uncertainty. These values are all non-negative and subject to the constraint $u + \sum_{c=1}^{C} b_{c} = 1$. SL models the belief assignment framework of DST using a Dirichlet distribution. The concentration parameters of the Dirichlet distribution, denoted as $\boldsymbol{\alpha} = [\alpha_{1}, \alpha_{2}, \cdots \alpha_{C}]$, is parameterized as $\alpha_{c} = 1 + e_{c}$, $\forall c \in [C]$, where $e_{c}$ denotes the evidence for the $c$th class.
The belief and uncertainty values are computed as $b_{c} = e_{c}/\alpha_{0}$ and $u = C/\alpha_{0}$, where $\alpha_{0} = \sum_{c=1}^{C} \alpha_{c}$ denotes the precision of the Dirichlet distribution. A higher $\alpha_{0}$ corresponds to a sharper and more confident distribution.

EDL performs classification by estimating the evidence vector $\boldsymbol{e} = [e_{1}, e_{2}, \cdots, e_{C}], \forall e_{c} > 0$, using a neural network. Specifically, the evidence vector of input $\mathbf{x} \in \mathbb{R}^{D}$ is computed as $\mathbf{e}_{\boldsymbol{\theta}, \boldsymbol{\phi}}(\mathbf{x}) = h(g_{\boldsymbol{\phi}}(f_{\boldsymbol{\theta}}(\mathbf{x})))$, where $f_{\boldsymbol{\theta}} : \mathbb{R}^{D} \rightarrow \mathbb{R}^{H}$, $g_{\boldsymbol{\phi}}: \mathbb{R}^{H} \rightarrow \mathbb{R}^{C}$, and $h:\mathbb{R}^{C} \rightarrow \mathbb{R}_{+}^{C}$ is the feature extractor, classifier, and activation function, respectively. Here, $\boldsymbol{\theta}$ and  $\boldsymbol{\phi}$ represent the parameters of the feature extractor and classifier, respectively, while $D$, $H$, and $C$ denote the dimension of the input data, dimension of feature representations, and the number of classes, respectively. 
Following the parametrization used in SL, the concentration parameters of the Dirichlet distribution are obtained as 
\begin{equation}\label{conven}
\boldsymbol{\alpha}_{\boldsymbol{\theta}, \boldsymbol{\phi}}(\mathbf{x}) = \textbf{1} + \mathbf{e}_{\boldsymbol{\theta}, \boldsymbol{\phi}}(\mathbf{x}), 
\end{equation}
where $\mathbf{1}= [1,1,\cdots1] \in \mathbb{R}^{C}$ is the vector of ones.

EDL is optimized using a loss function that consists of two components: i) expected mean squared error (MSE), responsible for accurate uncertainty-aware classification, and ii) Kullback-Leibler (KL) divergence penalty, ensuring the desired uncertainty behavior of the concentration parameters. For sample $(\mathbf{x}_{i}, \mathbf{y}_{i})$, where $\mathbf{y}_{i}$ is the one-hot encoded label, the loss function is formulated as follows: 
\begin{align}\label{loss}
\mathcal{L}^{(i)}(\boldsymbol{\theta}, \boldsymbol{\phi}) &= \mathbb{E}_{\boldsymbol{\pi} \sim \Dir(\boldsymbol{\alpha}_{\boldsymbol{\theta},\boldsymbol{\phi}}(\mathbf{x}_{i}))} [\lVert \mathbf{y}_{i}-\boldsymbol{\pi} \rVert_{2}^{2}] \\ \nonumber
&+ \lambda D_{KL}[\Dir(\boldsymbol{\tilde{\alpha}_{\boldsymbol{\theta}, \boldsymbol{\phi}}}(\mathbf{x}_{i})||\Dir(\mathbf{1})],  
\end{align}
where $\tilde{\boldsymbol{\alpha}}_{\boldsymbol{\theta}, \boldsymbol{\phi}}(\mathbf{x}_{i}) = \boldsymbol{\alpha}_{\boldsymbol{\theta}, \boldsymbol{\phi}}(\mathbf{x}_{i}) \odot (\mathbf{1}-\mathbf{y}_{i}) + \mathbf{y}_{i}$, and $\lambda$ is a regularization parameter.

\section{Density Aware Evidential Deep Learning}
\label{Density Aware Evidential Deep Learning}
\subsection{Model Overview}
\label{Model Overview}
DAEDL closely follows the conventional structure of EDL outlined in \cref{Evidential Deep Learning}, but introduces two significant modifications designed to address the limitations of EDL: 
(i) the adoption of an alternative parameterization (\cref{Contribution1 : param}), and (ii) the integration of feature space density (\cref{Contribution2 : density}). DAEDL is trained using the loss specified in Eq.\eqref{loss}. After training, we obtain the feature extractor $f_{\hat{\boldsymbol{\theta}}}$ and classifier $g_{\hat{\boldsymbol{\phi}}}$ with the optimized parameters $\hat{\boldsymbol{\theta}}$ and $\hat{\boldsymbol{\phi}}$. The training procedure for DAEDL is presented in Algorithm 1 in \cref{Appendix : Algorithm}.

In contrast to conventional EDL, DAEDL employs \textit{spectral normalization} \cite{miyato2018spectral} in the feature extractor $f_{\boldsymbol{\theta}}$, facilitating a meaningful density estimate even when a simple density estimator, such as GDA, is used. Spectral normalization has been widely utilized for single forward pass uncertainty estimation models to achieve a regularized feature space  \cite{liu2020simple, van2021feature, mukhoti2023deep}. Notably, we are the first to adopt it for DBU models. By employing spectral normalization, we bound the distance in the feature space by the distance in the input space, preventing feature representations from becoming overly sensitive to meaningless perturbations in the input space \cite{liu2020simple}. More detailed descriptions of spectral normalization are provided in \cref{Appendix : Spectral Normalization}.

\subsection{Alternative Parameterization}
\label{Contribution1 : param}
We propose a novel parameterization for the concentration parameters of DAEDL, which overcomes the limitations of the conventional scheme in EDL while establishing a connection with the softmax model. The conventional parameterization for the concentration parameters of EDL is expressed as in Eq.\eqref{conven}. 
This parameterization has inherent limitations due to the challenge of achieving an appropriate balance between $\mathbf{1}$ and $\mathbf{e}_{\boldsymbol{\theta}, \boldsymbol{\phi}}(\mathbf{x})$. As there is no explicit range for $\mathbf{e}_{\boldsymbol{\theta}, \boldsymbol{\phi}}(\mathbf{x})$, $\mathbf{1}$ may dominate $\mathbf{e}_{\boldsymbol{\theta}, \boldsymbol{\phi}}(\mathbf{x})$, potentially leading to counter-intuitive outcomes for the expected class probabilities. For example, in the case with $C=3$, given a highly likely ID data point $\mathbf{x}_{\id}$ and a highly-peaked evidence vector computed as $\mathbf{e}_{\boldsymbol{\theta}, \boldsymbol{\phi}}(\mathbf{x}_{\id}) = [1,0,0]$, the expected class probability is derived as $\bar{\boldsymbol{\pi}}_{\id} = [0.5,0.25,0.25]$, contradicting the intuition that the class probability for a highly likely ID data point should be more strongly peaked toward the corresponding class. We hypothesize that these counter-intuitive results may hinder the model from learning the decision boundary accurately, leading to degraded classification performance. 

To address the above limitations and enhance the classification performance of EDL, DAEDL introduces two concurrent modifications to the conventional parameterization: i) the removal of $\mathbf{1}$ in Eq.\eqref{conven} and ii) the adoption of the exponential activation function, instead of ReLU in EDL. We eliminate $\mathbf{1}$ to address the challenge associated with balancing the magnitudes between  $\mathbf{1}$ and $\mathbf{e}_{\boldsymbol{\theta}, \boldsymbol{\phi}}(\mathbf{x})$. We argue that $\mathbf{1}$ is not necessarily an essential component in the model, hindering the effectiveness of EDL in the classification task. By removing it, DAEDL allows the model to learn a Dirichlet distribution solely from the data. Subsequently, we adopt an exponential function as the activation function to establish a close connection with the softmax model, replacing ReLU \cite{sensoy2018evidential} or Softplus \cite{deng2023uncertainty}. Then, the concentration parameters of DAEDL during the training stage can be formulated as $\boldsymbol{\alpha}_{\boldsymbol{\theta}, \boldsymbol{\phi}}(\mathbf{x}) = \exp \big( g_{\boldsymbol{\phi}}(f_{\boldsymbol{\theta}}(\mathbf{x}) \big)$. As summarized in \cref{Parameter Modeling Option}, employing such parameterization aligns the expected class probability of DAEDL with the output of the softmax model. Given that the softmax model generally outperforms conventional EDL in terms of classification accuracy, we expect that DAEDL will improve upon EDL in classification performance. 

\begin{table}[htbp]
\caption{
Comparison of the concentration parameter ($\alpha_{c}$) and expected class probability ($\mathbb{E}_{\boldsymbol{\pi} \sim \Dir(\boldsymbol{\alpha})}[\pi_{c}]$) during the training stage between the standard EDL \cite{sensoy2018evidential, deng2023uncertainty} and DAEDL. $z_{c}(\mathbf{x}) = (g_{\boldsymbol{\phi}}(f_{\boldsymbol{\theta}}(\mathbf{x})))_{c}$ represents the logit for each class $\forall c \in [C]$, and $h$ is an activation function that ensures the non-negativity of the evidence vector (e.g., ReLU \cite{sensoy2018evidential} and Softplus \cite{deng2023uncertainty}).
}
\label{Parameter Modeling Option}
\vskip 0.15in
\begin{center}
\begin{small}
\begin{sc}
\begin{tabular}{@{}llll@{}}
\toprule
&\textbf{EDL} & \textbf{DAEDL} \\ \midrule
$\alpha_{c}$ & $1 + h(z_{c}(\mathbf{x}))$ & $ \exp(z_c(\mathbf{x}))$ \\ \midrule
$\mathbb{E}_{\boldsymbol{\pi} \sim \Dir(\boldsymbol{\alpha})}[\pi_{c}]$ & $\frac{1 + h(z_{c}(\mathbf{x}))}{C + \sum_{c=1}^{C} h(z_{c}(\mathbf{x}))}$ & $\frac{\exp(z_{c}(\mathbf{x}))}{\sum_{c=1}^{C} \exp(z_{c}(\mathbf{x}))}$ \\ \bottomrule
\end{tabular} 
\end{sc}
\end{small}
\end{center}
\vskip -0.15in
\end{table}

\subsection{Integration of Feature Space Density}
\label{Contribution2 : density}
 DAEDL leverages the feature space density of a testing example during prediction to obtain an uncertainty estimate that reflects its distance from the training data. 
Specifically, we employ GDA as the density estimator in the feature space, and integrate the estimated feature space density of the testing example with the logits in the prediction stage. 

\paragraph{Density estimation.}
DAEDL employs GDA as a density estimator in the feature space, inspired by its efficiency and the ability to operate without additional training \cite{mukhoti2023deep}. Given the training dataset $\mathcal{D}_{\tr} = \{(\mathbf{x}_{i}, y_{i})\}_{i=1}^{N}$ and the trained feature extractor $f_{\hat{\boldsymbol{\theta}}}$, the parameters of GDA for each class $\forall c \in [C]$ are obtained as follows: 
\begin{align*}
\hat{\omega}_{c} &= \frac{N_{c}}{N}, \quad 
\hat{\boldsymbol{\mu}}_{c} = \frac{1}{N_{c}} \sum_{\{i:y_{i} = c\}}^{} f_{\hat{\boldsymbol{\theta}}}(\mathbf{x}_{i}), \\
\hat{\Sigma}_{c} &= \frac{1}{N_{c}-1} \sum_{\{i:y_{i} = c\}}^{} (f_{\hat{\boldsymbol{\theta}}}(\mathbf{x}_{i}) - \hat{\boldsymbol{\mu}}_{c}) (f_{\hat{\boldsymbol{\theta}}}(\mathbf{x}_{i}) - \hat{\boldsymbol{\mu}}_{c})^{T}, 
\end{align*}
where $\hat{\omega}_{c}$, $\hat{\boldsymbol{\mu}}_{c}$, and $\hat{\Sigma}_{c}$ represent the weight, mean vector, and covariance matrix for each class, respectively. In addition, $N$ denotes the total number of training data points, and $N_{c} = \sum_{i=1}^{N} \mathbbm{1}_{\{y_{i} = c\}}$ represents the number of data points for each class $\forall c \in [C]$. The density estimation algorithm for DAEDL is provided in Algorithm 2 in \cref{Appendix : Algorithm}.

\paragraph{Prediction.} In the prediction stage, we estimate the feature space density of the testing example and integrate it with the output of EDL to obtain a reliable uncertainty estimate that reflects the distance between the testing example and the training data. The graphical representation of the prediction stage is depicted in \cref{Fig2}.

\begin{figure}[htbp]
\includegraphics[width = \columnwidth]{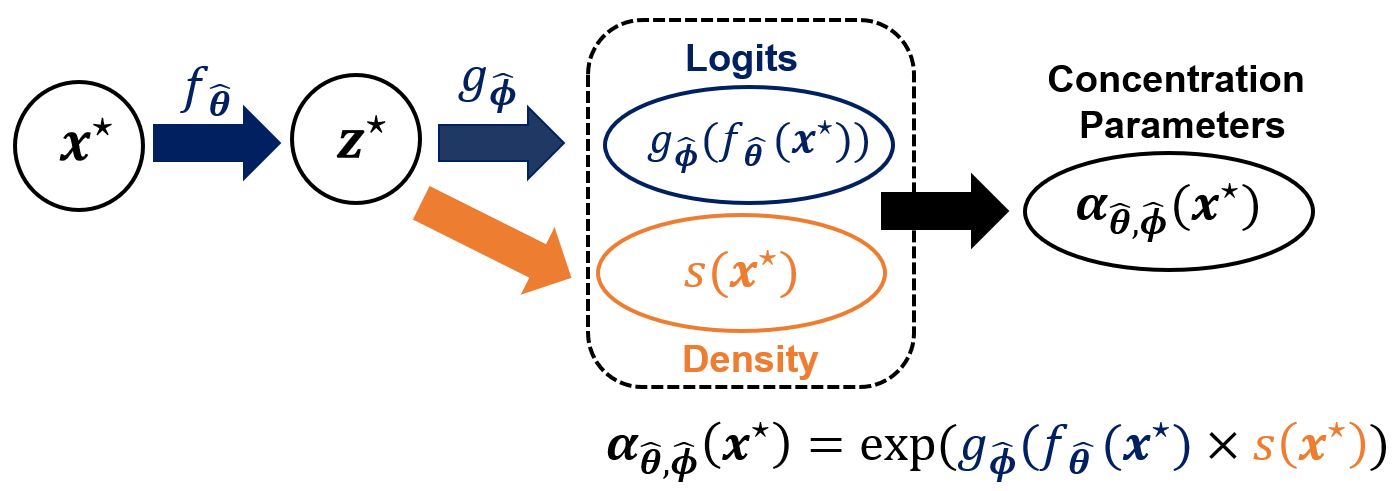}
\vspace{-0.2 in}
\caption{Graphical representation of the prediction stage of DAEDL. The process begins with the computation of the logits (blue),  followed by the estimation of the normalized feature space density of the testing example (orange). These two components are integrated to derive the concentration parameters (black).}
\label{Fig2}
\end{figure}

First, we estimate the feature space density of a testing example using a fitted GDA model on the training data. For testing example $\mathbf{x}^{\star} \in \mathbb{R}^{D}$, the feature space density is obtained as follows: 
\begin{align*}
p(\mathbf{z}^{\star} = f_{\hat{\boldsymbol{\theta}}}(\mathbf{x}^{\star}) ) = \sum_{c=1}^{C} \hat{\omega}_{c} \ \mathcal{N}(\mathbf{z}^{\star} = f_{\hat{\boldsymbol{\theta}}}(\mathbf{x}^{\star}) |\hat{\boldsymbol{\mu}}_{c}, \hat{\Sigma}_{c}),
\end{align*}
where $\mathbf{z}^{\star} \in \mathbb{R}^{H}$ is a feature representation of $\mathbf{x}^{\star}$. 
In practice, we first calculate the logarithm of the feature space density, and then normalize it to a range between $[0,1]$ to avoid challenges related to parameter divergence, leveraging the minimum and maximum values observed in the logarithm of the feature space density of the training data $\mathcal{X}_{\tr} = \{\mathbf{x}_{i}\}_{i=1}^{N}$ and the $\Clip$ function defined as $\Clip(x) = \max(0, \min(1,x))$. The normalized feature space density for $\mathbf{x}^{\star}$ is expressed as follows: 
\begin{align*}
s(\mathbf{x}^{\star}) = \Clip \left( \frac{\log p(\mathbf{z}^{\star} = f_{\hat{\boldsymbol{\theta}}}(\mathbf{x}^{\star}))-d_{\min}}{d_{\max} - d_{\min}} \right), 
\end{align*}
where $d_{\min} = \min_{\mathbf{x} \in \mathcal{X}_{tr}} \left \{ \log p( f_{\hat{\boldsymbol{\theta}}}(\mathbf{x})) \right \}$ and $d_{\max} = \max_{\mathbf{x} \in \mathcal{X}_{tr}} \left \{\log p(f_{\hat{\boldsymbol{\theta}}}(\mathbf{x})) \right \}$.

Subsequently, we combine the normalized feature space density (i.e., $ s(\mathbf{x}^{\star})$) with the logits (i.e., $g_{\hat{\boldsymbol{\phi}}}(f_{\hat{\boldsymbol{\theta}}}(\mathbf{x}^{\star})$) to obtain the concentration parameters of DAEDL. Specifically, we multiply the normalized feature space density with the logits, and apply the exponential activation function to derive the concentration parameters as follows:
\begin{align*}
\boldsymbol{\alpha}_{\hat{\boldsymbol{\theta}}, \hat{\boldsymbol{\phi}}}(\mathbf{x}^{\star})&= \exp \left(g_{\hat{\boldsymbol{\phi}}}(f_{\hat{\boldsymbol{\theta}}}(\mathbf{x}^{\star})) \times s(\mathbf{x}^{\star})\right).
\end{align*}
This process can be interpreted as scaling the logits with the estimated confidence level of the prediction before applying the activation function. 
This integration strategy demonstrates effective empirical performance, underpinned by favorable theoretical properties, including \cref{Relationship with AdaTS}, \cref{Feature Distance Awareness}, and \cref{Input Distance Awareness}. 
The algorithm for the prediction of DAEDL is provided in Algorithm 3 in \cref{Appendix : Algorithm}.

\section{Theoretical Analysis}
\label{Theoretical Analysis}
We establish the theoretical foundations of DAEDL. 
First, we prove that DAEDL generates a uniform predictive distribution over classes for highly likely OOD testing examples (\cref{Uniform for OOD Data}). This indicates that DAEDL effectively quantifies uncertainty for OOD data, whereas EDL fails to quantify uncertainty for testing examples distant from the training data, as illustrated in \cref{Fig1}. 

Second, we prove that DAEDL can be interpreted as predicting an \textit{input-dependent posterior distribution of Dirichlet-Categorical model} \cite{charpentier2020posterior} with an improper prior $\boldsymbol{\pi} \sim \Dir (\mathbf{0})$ in the Bayesian context (\cref{Bayesian Interpretation}). This indicates that DAEDL addresses the challenge of prior specification by using an improper prior, enabling the model to learn the Dirichlet distribution solely from the data. 

Third, we prove that the predictive distribution of DAEDL can be conceptualized as an \textit{adaptive temperature scaled} \cite{joy2023sample} softmax model (\cref{Relationship with AdaTS}). This indicates that DAEDL inherits the advantages of adaptive temperature scaling, which is known for its effectiveness in enhancing model calibration and OOD detection performance \cite{balanya2022adaptive, joy2023sample, krumpl2024ats}. 

Finally, we prove that the predictive uncertainty of a testing example estimated using DAEDL is proportional to the distance between the testing example and training data manifold in both the feature space (\cref{Feature Distance Awareness}) and input space  (\cref{Input Distance Awareness}) under mild conditions. This property is formally defined as \textit{distance awareness} \cite{liu2020simple} and has been established as a beneficial condition for obtaining high-quality uncertainty estimates. 

\begin{theorem}
\label{Uniform for OOD Data}
\textit{\textbf{(Uniform Prediction for OOD Data) }} As the distance between the testing example $\mathbf{x}_{\ood}^{\star}$ and training data in the input space diverges, i.e., $\mathbb{E}_{\mathbf{x}' \sim \mathcal{X}_{\tr}} \lVert \mathbf{x}_{\ood}^{\star}-\mathbf{x}' \rVert_{2} \rightarrow \infty$, the predictive distribution of DAEDL converges to the uniform distribution over classes $\forall {c} \in [C]$, i.e., $p(y|\mathbf{x}_{\ood}^{\star}) \rightarrow \mathcal{U}\{1,C\}$. Additionally, the concentration parameters of DAEDL converges to $\mathbf{1}$, i.e., $\boldsymbol{\alpha}(\mathbf{x}_{\ood}^{\star}) \rightarrow \mathbf{1}$.
\end{theorem}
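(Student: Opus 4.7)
The plan is to chase the divergence of the input-space distance through the DAEDL prediction pipeline and show that the normalized feature-space density $s(\mathbf{x}_{\ood}^{\star})$ is driven to $0$, which in turn forces the concentration parameters to $\mathbf{1}$ and hence the predictive distribution to uniform. The argument decomposes into four links: (i) $\mathbb{E}_{\mathbf{x}' \sim \mathcal{X}_{\tr}} \lVert \mathbf{x}_{\ood}^{\star}-\mathbf{x}' \rVert_2 \to \infty$ should imply that $f_{\hat{\boldsymbol{\theta}}}(\mathbf{x}_{\ood}^{\star})$ is pushed far from every GDA mean $\hat{\boldsymbol{\mu}}_c$; (ii) the GDA density $p(\mathbf{z}^{\star})$ therefore vanishes, so $\log p(\mathbf{z}^{\star}) \to -\infty$; (iii) once $\log p(\mathbf{z}^{\star})$ falls below $d_{\min}$, the $\Clip$ in the definition of $s(\mathbf{x}_{\ood}^{\star})$ saturates at $0$ exactly; (iv) the concentration $\boldsymbol{\alpha}_{\hat{\boldsymbol{\theta}}, \hat{\boldsymbol{\phi}}}(\mathbf{x}_{\ood}^{\star}) = \exp\bigl(g_{\hat{\boldsymbol{\phi}}}(f_{\hat{\boldsymbol{\theta}}}(\mathbf{x}_{\ood}^{\star})) \cdot s(\mathbf{x}_{\ood}^{\star})\bigr)$ then collapses to $\exp(\mathbf{0}) = \mathbf{1}$, and $\mathbb{E}_{\boldsymbol{\pi}\sim \Dir(\mathbf{1})}[\pi_c] = 1/C$ delivers $p(y\mid \mathbf{x}_{\ood}^{\star}) = \mathcal{U}\{1, C\}$.

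Steps (ii)--(iv) are essentially routine. Step (ii) uses the fact that a finite Gaussian mixture with positive-definite covariances decays like $\exp(-c\lVert\mathbf{z}\rVert_2^{2})$ as $\lVert\mathbf{z}\rVert_2 \to\infty$, so $\lVert f_{\hat{\boldsymbol{\theta}}}(\mathbf{x}_{\ood}^{\star}) - \hat{\boldsymbol{\mu}}_c\rVert_2 \to \infty$ for every $c$ forces $\log p(\mathbf{z}^{\star})\to -\infty$. Step (iii) is an immediate consequence of the $\Clip$ definition. Crucially for step (iv), the saturation in (iii) makes $s(\mathbf{x}_{\ood}^{\star})$ equal to $0$ exactly (not merely in the limit), so the product inside the exponential is the zero vector regardless of how large the logits $g_{\hat{\boldsymbol{\phi}}}(f_{\hat{\boldsymbol{\theta}}}(\mathbf{x}_{\ood}^{\star}))$ grow; this avoids any subtle $0 \cdot \infty$ issue and yields $\boldsymbol{\alpha}(\mathbf{x}_{\ood}^{\star})=\mathbf{1}$ and the uniform Dirichlet posterior.

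The main obstacle is step (i), the transfer from input-space divergence to feature-space divergence. Spectral normalization only upper-bounds the Lipschitz constant of $f_{\hat{\boldsymbol{\theta}}}$; in principle an upper-Lipschitz map can collapse far-apart inputs onto nearby features, so divergence of input distance does not \emph{automatically} imply divergence of feature distance. I would invoke a bi-Lipschitz assumption on $f_{\hat{\boldsymbol{\theta}}}$, which is the standard hypothesis in the SNGP/DUE line of work and is consistent with the ``mild conditions'' mentioned for the subsequent distance-awareness theorems. With a lower Lipschitz constant $L_{1}>0$, one has $\lVert f_{\hat{\boldsymbol{\theta}}}(\mathbf{x}_{\ood}^{\star}) - f_{\hat{\boldsymbol{\theta}}}(\mathbf{x}_i)\rVert_2 \ge L_{1}\lVert \mathbf{x}_{\ood}^{\star} - \mathbf{x}_i\rVert_2$ for every training point, and averaging over $\mathcal{X}_{\tr}$ then combining with the triangle inequality (using that each $\hat{\boldsymbol{\mu}}_c$ is a convex combination of training features and is therefore bounded) upgrades divergence in expectation over the training set to divergence of $\lVert f_{\hat{\boldsymbol{\theta}}}(\mathbf{x}_{\ood}^{\star}) - \hat{\boldsymbol{\mu}}_c\rVert_2$ for each class $c$. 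Once this first link is secured, steps (ii)--(iv) close the proof mechanically.
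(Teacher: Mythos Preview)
Your proof is correct and follows essentially the same four-step decomposition as the paper: input-space divergence $\Rightarrow$ feature-space divergence $\Rightarrow$ GMM density $\to 0$ $\Rightarrow$ $s(\mathbf{x}_{\ood}^{\star}) = 0$ via $\Clip$ $\Rightarrow$ $\boldsymbol{\alpha} = \mathbf{1}$ and uniform prediction. The only difference is in step~(i): the paper simply \emph{postulates} the input-to-feature divergence transfer as a standing assumption rather than deriving it from a bi-Lipschitz hypothesis, and for step~(ii) it cites a lemma from \citet{charpentier2021natural} instead of arguing the Gaussian-mixture tail decay directly; your remark that $\Clip$ saturates at exactly $0$ (so no $0\cdot\infty$ issue arises regardless of logit growth) is in fact sharper than the paper's own phrasing.
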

\begin{theorem}
\label{Bayesian Interpretation}
\textbf{\textit{(Bayesian Interpretation of DAEDL)}}
In the Bayesian context, DAEDL can be interpreted as predicting an input-dependent posterior distribution of the Dirichlet-Categorical model with an improper prior $\boldsymbol{\pi} \sim \Dir(\mathbf{0})$.
\end{theorem}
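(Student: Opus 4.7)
The plan is to make the Dirichlet--Categorical conjugacy explicit and then identify DAEDL's input-dependent concentration vector with the posterior pseudo-counts obtained under the improper prior $\boldsymbol{\pi} \sim \Dir(\mathbf{0})$. First, I would recall that in the Dirichlet--Categorical model, with prior $\boldsymbol{\pi} \sim \Dir(\boldsymbol{\alpha}^{\prior})$ and a collection of observations summarized by an evidence (pseudo-count) vector $\boldsymbol{\beta}^{\data}$, conjugacy gives the posterior $\boldsymbol{\pi}\mid \data \sim \Dir(\boldsymbol{\alpha}^{\prior} + \boldsymbol{\beta}^{\data})$. This frames the standard EDL parameterization $\boldsymbol{\alpha} = \mathbf{1} + \mathbf{e}$ as the special case in which $\boldsymbol{\alpha}^{\prior} = \mathbf{1}$ (a uniform Dirichlet prior) and $\mathbf{e}$ plays the role of $\boldsymbol{\beta}^{\data}$, providing the shared background against which DAEDL is to be compared.

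Next, I would specialize the same conjugacy identity to the improper prior $\boldsymbol{\pi}\sim \Dir(\mathbf{0})$ (improper in the sense that $\mathbf{0}$ lies outside the Dirichlet's admissible parameter range, with normalizing constant diverging, although the posterior is proper as soon as every entry of $\boldsymbol{\beta}^{\data}$ is strictly positive). Under this prior, conjugacy yields $\boldsymbol{\pi}\mid \data \sim \Dir(\boldsymbol{\beta}^{\data})$ directly, so the posterior concentration parameters coincide with the evidence vector with no additive offset. I would then recall from Section~\ref{Contribution2 : density} that DAEDL's predictive concentration vector for a testing example is
\begin{equation*}
\boldsymbol{\alpha}_{\hat{\boldsymbol{\theta}},\hat{\boldsymbol{\phi}}}(\mathbf{x}^{\star}) = \exp\!\bigl(g_{\hat{\boldsymbol{\phi}}}(f_{\hat{\boldsymbol{\theta}}}(\mathbf{x}^{\star}))\times s(\mathbf{x}^{\star})\bigr),
\end{equation*}
with no $\mathbf{1}$ term and with every component strictly positive thanks to the exponential activation. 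Identifying this quantity with an input-dependent evidence vector $\boldsymbol{\beta}^{\data}(\mathbf{x}^{\star})$, the predictive Dirichlet $\Dir(\boldsymbol{\alpha}_{\hat{\boldsymbol{\theta}},\hat{\boldsymbol{\phi}}}(\mathbf{x}^{\star}))$ is exactly the Dirichlet--Categorical posterior under the improper prior $\Dir(\mathbf{0})$, where the data evidence is produced by the feature extractor, classifier, and GDA-based density estimator jointly.

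The main conceptual point, rather than a technical obstacle, is to justify that the use of the improper prior is legitimate in this setting. I would argue that DAEDL never needs to evaluate the prior density in isolation: both the training objective in Eq.\eqref{loss} and the predictive step of Section~\ref{Contribution2 : density} operate only on the (input-dependent) posterior Dirichlet, which is guaranteed to be proper because $\exp(\cdot)>0$ componentwise ensures $\boldsymbol{\beta}^{\data}(\mathbf{x}^{\star}) \in \mathbb{R}_{+}^{C}$. Once this is noted, the theorem follows immediately from conjugacy and the parameterization in Section~\ref{Contribution1 : param}. A short concluding remark would contrast this with conventional DBU models (EDL, PostNet, NatPN, $\mathrm{I}$-EDL), whose $\boldsymbol{\alpha} = \mathbf{1} + \mathbf{e}$ form corresponds to a uniform prior $\Dir(\mathbf{1})$, thereby situating DAEDL within the same Bayesian family but with the non-informative Haldane-type prior that lets the network learn the Dirichlet entirely from data.
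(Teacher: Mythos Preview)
Your proposal is correct and follows essentially the same approach as the paper's own treatment in Appendix~\ref{Appendix : Bayesian Interpretation of DAEDL}: both invoke Dirichlet--Categorical conjugacy to write $\boldsymbol{\alpha}_{\post} = \boldsymbol{\alpha}_{\prior} + \boldsymbol{\alpha}_{\data}$, identify DAEDL's concentration vector (with no additive $\mathbf{1}$) as the pseudo-count vector under $\boldsymbol{\alpha}_{\prior}=\mathbf{0}$, observe that the exponential activation guarantees a proper posterior, and contrast this with the uniform prior $\Dir(\mathbf{1})$ implicit in standard EDL, PostNet, NatPN, and $\mathcal{I}$-EDL. Your remark that the improper (Haldane-type) prior is never evaluated in isolation is a slightly more explicit justification than the paper gives, but the argument is otherwise the same.
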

\begin{theorem}
\label{Relationship with AdaTS}
\textbf{\textit{(Relationship with Temperature Scaling)}} The predictive distribution of DAEDL aligns with the adaptive temperature scaled softmax model: 
\begin{align*}
p(y|\mathbf{x}^{\star}) = \Cat(\boldsymbol{\bar{\pi}}), \quad 
%\bar{\boldsymbol{\pi}} = \sigma \left( \frac{\mathbf{z}(\mathbf{x}^{\star})}{T(\mathbf{x}^{\star})} \right),
\bar{\boldsymbol{\pi}} = \sigma \left( {\mathbf{z}(\mathbf{x}^{\star})}/{T(\mathbf{x}^{\star})} \right),
\end{align*}
where $\mathbf{z}(\mathbf{x}^{\star})=g_{\hat{\boldsymbol{\phi}}}(f_{\hat{\boldsymbol{\theta}}}(\mathbf{x}^{\star}))$ is the logits and  $T(\mathbf{x}^{\star}) = 1/s(\mathbf{x}^{\star})$ is a sample-dependent temperature. 
\end{theorem}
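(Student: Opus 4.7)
The plan is to verify the theorem by a direct computation that chains together two standard facts: (i) the predictive distribution $p(y\mid \mathbf{x}^\star)$ is obtained by marginalizing $\boldsymbol{\pi}\sim \Dir(\boldsymbol{\alpha}_{\hat{\boldsymbol{\theta}},\hat{\boldsymbol{\phi}}}(\mathbf{x}^\star))$ against a categorical likelihood, so it is $\Cat(\bar{\boldsymbol{\pi}})$ with $\bar{\pi}_c = \alpha_c/\alpha_0$; and (ii) the DAEDL prediction-stage parameterization $\boldsymbol{\alpha}_{\hat{\boldsymbol{\theta}},\hat{\boldsymbol{\phi}}}(\mathbf{x}^\star)= \exp\bigl(\mathbf{z}(\mathbf{x}^\star)\, s(\mathbf{x}^\star)\bigr)$ turns this ratio into a softmax of the logits scaled by $s(\mathbf{x}^\star)$.

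Concretely, I would proceed as follows. First, recall the DAEDL prediction-stage formula $\alpha_c(\mathbf{x}^\star) = \exp\bigl(z_c(\mathbf{x}^\star)\, s(\mathbf{x}^\star)\bigr)$ for every $c \in [C]$, where $z_c(\mathbf{x}^\star) = \bigl(g_{\hat{\boldsymbol{\phi}}}(f_{\hat{\boldsymbol{\theta}}}(\mathbf{x}^\star))\bigr)_c$. Next, write the Bayesian predictive distribution for the Dirichlet-Categorical model as
\begin{equation*}
p(y=c\mid\mathbf{x}^\star) \;=\; \int \pi_c \,\Dir(\boldsymbol{\pi}\mid \boldsymbol{\alpha}(\mathbf{x}^\star))\,d\boldsymbol{\pi} \;=\; \mathbb{E}_{\boldsymbol{\pi}\sim \Dir(\boldsymbol{\alpha}(\mathbf{x}^\star))}[\pi_c] \;=\; \frac{\alpha_c(\mathbf{x}^\star)}{\alpha_0(\mathbf{x}^\star)},
\end{equation*}
which is the standard Dirichlet mean identity with $\alpha_0 = \sum_{c'} \alpha_{c'}$. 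Substituting the exponential parameterization then gives
\begin{equation*}
\bar{\pi}_c \;=\; \frac{\exp\bigl(z_c(\mathbf{x}^\star)\, s(\mathbf{x}^\star)\bigr)}{\sum_{c'=1}^{C}\exp\bigl(z_{c'}(\mathbf{x}^\star)\, s(\mathbf{x}^\star)\bigr)}\;=\;\bigl[\sigma\bigl(\mathbf{z}(\mathbf{x}^\star)\, s(\mathbf{x}^\star)\bigr)\bigr]_c,
\end{equation*}
and finally identifying $T(\mathbf{x}^\star) := 1/s(\mathbf{x}^\star)$ rewrites the argument as $\mathbf{z}(\mathbf{x}^\star)/T(\mathbf{x}^\star)$, which is exactly the adaptive temperature-scaled softmax.

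There is essentially no analytic difficulty; the proof is a bookkeeping exercise that rests entirely on the Dirichlet mean identity together with DAEDL's alternative parameterization. The only subtlety worth addressing explicitly is well-definedness of $T(\mathbf{x}^\star) = 1/s(\mathbf{x}^\star)$ when $s(\mathbf{x}^\star) = 0$, i.e., when the normalized log-density is clipped to zero; in that boundary case $\boldsymbol{\alpha}(\mathbf{x}^\star) = \mathbf{1}$ and $\bar{\boldsymbol{\pi}}$ is the uniform distribution, which corresponds to the limit $T \to \infty$ and is consistent with the temperature-scaling interpretation. I would therefore state the equivalence for $s(\mathbf{x}^\star) > 0$ and note the limiting uniform behavior as $s(\mathbf{x}^\star)\downarrow 0$, which also dovetails with Theorem~\ref{Uniform for OOD Data}.
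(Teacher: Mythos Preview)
Your proposal is correct and follows essentially the same route as the paper: compute the Dirichlet mean $\bar{\pi}_c=\alpha_c/\alpha_0$ for the predictive distribution, substitute the prediction-stage parameterization $\alpha_c=\exp(z_c\,s(\mathbf{x}^\star))$, and recognize the result as a softmax with temperature $T(\mathbf{x}^\star)=1/s(\mathbf{x}^\star)$. Your added remark on the $s(\mathbf{x}^\star)=0$ boundary case (uniform $\bar{\boldsymbol{\pi}}$ as $T\to\infty$) is a nice touch that the paper's proof does not make explicit.
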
 
\begin{theorem}
\label{Feature Distance Awareness}
\textbf{\textit{(Feature Distance Awareness)}} The predictive distribution of the testing example $\mathbf{x}^{\star}$ obtained by DAEDL is \textit{distance aware} in the feature space, i.e., $u(\mathbf{z}^{\star}) = \nu \big(\mathbb{E}_{\mathbf{z}' \sim \mathcal{Z}_{\tr}} \lVert \mathbf{z}^{\star} - \mathbf{z}' \rVert_{2} \big)$, where $u$ is an uncertainty measure, $\nu$ is a monotonic function, $\mathbf{z}^{\star} = f_{\boldsymbol{\theta}}(\mathbf{x}^{\star})$ is the feature representation of $\mathbf{x}^{\star}$, and $\mathcal{Z}_{tr}$ is the set of feature representations of the training data.
\end{theorem}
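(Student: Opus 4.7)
The plan is to chain several monotone relationships together, going from the uncertainty $u(\mathbf{z}^\star)$ through the normalized density $s(\mathbf{x}^\star)$ and the log-density $\log p(\mathbf{z}^\star)$ down to the expected Euclidean distance $\mathbb{E}_{\mathbf{z}' \sim \mathcal{Z}_{\tr}} \lVert \mathbf{z}^\star - \mathbf{z}' \rVert_2$. A natural uncertainty measure inherited from the Dirichlet parameterization is $u = C/\alpha_0 = C / \sum_c \exp\bigl(s(\mathbf{x}^\star)\, z_c(\mathbf{x}^\star)\bigr)$, which is strictly decreasing in $s(\mathbf{x}^\star)$ when the logits are bounded (guaranteed by the spectral normalization of $f_{\hat{\boldsymbol{\theta}}}$ together with the Lipschitz classifier $g_{\hat{\boldsymbol{\phi}}}$). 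Because the $\Clip$ map is monotone inside $[0,1]$, $s(\mathbf{x}^\star)$ is in turn strictly increasing in $\log p(\mathbf{z}^\star)$ on the non-saturated regime $[d_{\min}, d_{\max}]$. Composing these two monotone relationships expresses $u$ as a strictly decreasing function of the feature-space log-density.

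Next, I would analyse how $\log p(\mathbf{z}^\star)$ scales with distance in the feature space. Writing out the GDA log-density and sandwiching the log-sum-exp gives
\begin{align*}
-\tfrac{1}{2}\min_{c} D_c(\mathbf{z}^\star) + C_1 \;\le\; \log p(\mathbf{z}^\star) \;\le\; -\tfrac{1}{2}\min_{c} D_c(\mathbf{z}^\star) + C_2,
\end{align*}
where $D_c(\mathbf{z}^\star) = (\mathbf{z}^\star - \hat{\boldsymbol{\mu}}_c)^\top \hat{\Sigma}_c^{-1}(\mathbf{z}^\star - \hat{\boldsymbol{\mu}}_c)$ is the squared Mahalanobis distance and $C_1, C_2$ are constants independent of $\mathbf{z}^\star$. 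Under the mild assumption that the class covariances have uniformly bounded spectrum, $\lambda_{\min} I \preceq \hat{\Sigma}_c \preceq \lambda_{\max} I$ for all $c$, the Mahalanobis and Euclidean distances are equivalent up to multiplicative constants, so $\log p(\mathbf{z}^\star)$ is a strictly decreasing function of $\min_{c} \lVert \mathbf{z}^\star - \hat{\boldsymbol{\mu}}_c \rVert_2$.

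The third link relates the expected Euclidean distance to the training features with the distance to the class means. Partitioning the expectation by class and using the triangle inequality yields
\begin{align*}
\Bigl| \mathbb{E}_{\mathbf{z}' \sim \mathcal{Z}_{\tr}} \lVert \mathbf{z}^\star - \mathbf{z}' \rVert_2 - \textstyle\sum_c \hat{\omega}_c \lVert \mathbf{z}^\star - \hat{\boldsymbol{\mu}}_c \rVert_2 \Bigr| \;\le\; \textstyle\sum_c \hat{\omega}_c\, \mathbb{E}_{\mathbf{z}' \mid y = c} \lVert \mathbf{z}' - \hat{\boldsymbol{\mu}}_c \rVert_2,
\end{align*}
so under the mild condition that the within-class scatter is small relative to $\mathbf{z}^\star$'s distance from the class means (an OOD-type regime, which is precisely when distance awareness is relevant), the expected distance tracks $\sum_c \hat{\omega}_c \lVert \mathbf{z}^\star - \hat{\boldsymbol{\mu}}_c \rVert_2$ monotonically, and in particular is monotone in $\min_c \lVert \mathbf{z}^\star - \hat{\boldsymbol{\mu}}_c \rVert_2$. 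Composing all three monotone maps yields $u(\mathbf{z}^\star) = \nu\bigl( \mathbb{E}_{\mathbf{z}' \sim \mathcal{Z}_{\tr}} \lVert \mathbf{z}^\star - \mathbf{z}' \rVert_2 \bigr)$ for some monotonic $\nu$.

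The main obstacle is bridging the second and third steps: a log-sum-exp of weighted squared Mahalanobis norms does not in general collapse to a single function of the expected Euclidean distance. The "mild conditions" therefore must do real work, controlling (i) the spread of class means so that the $\min_c D_c$ term dominates the mixture, (ii) the uniform spectral bounds on $\hat{\Sigma}_c$ so that Mahalanobis and Euclidean metrics differ by only multiplicative constants, and (iii) restriction to the unsaturated range of $\Clip$, so that $\nu$ is a genuine monotone function rather than a constant. I would state these explicitly as the mild assumptions in the proof and verify that they are consistent with the spectrally normalized feature extractor used throughout DAEDL.
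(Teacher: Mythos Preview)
Your approach is genuinely different from the paper's, and in many ways more honest. The paper's proof is extremely short: it chooses predictive entropy $u(\mathbf{z}^\star)=\mathcal{H}[p(y\mid\mathbf{z}^\star)]$ as the uncertainty measure, invokes a lemma (adapted from Bui et al.) that $u$ is monotonically decreasing in $s(\mathbf{x}^\star)$ on $(0,1]$, and then simply \emph{assumes} (stated as Assumption~A.2) that $s(\mathbf{x}^\star)$ is monotonically decreasing in $\mathbb{E}_{\mathbf{z}'\sim\mathcal{Z}_{\tr}}\lVert\mathbf{z}^\star-\mathbf{z}'\rVert_2$. Composing the two gives the result immediately. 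In other words, the entire analytic content you develop in your second and third steps---the log-sum-exp sandwich, the Mahalanobis/Euclidean equivalence, the triangle-inequality control of within-class scatter---is replaced in the paper by a bare assumption. What your approach buys is an explicit catalogue of the conditions under which the monotonicity actually holds; what the paper's approach buys is brevity and avoidance of the delicate mixture-density argument you correctly flag as the main obstacle.

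There is, however, one gap in your first step. You take $u=C/\alpha_0=C/\sum_c\exp\bigl(s(\mathbf{x}^\star)z_c(\mathbf{x}^\star)\bigr)$ and claim it is strictly decreasing in $s$ when the logits are bounded. Differentiating gives $\partial_s\alpha_0=\sum_c z_c\exp(s\,z_c)$, and boundedness of the $z_c$ does not control the sign of this weighted sum: if the logits are predominantly negative the precision $\alpha_0$ can \emph{decrease} as $s$ increases, breaking your monotonicity. The paper avoids this by using predictive entropy instead of $C/\alpha_0$; entropy of $\sigma(\mathbf{z}/T)$ is a genuine monotone function of the temperature $T=1/s$ regardless of the signs of the logits, which is exactly why Lemma~A.4 goes through cleanly. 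If you wish to keep your route, either switch to entropy as the uncertainty statistic or add an explicit assumption such as $\sum_c z_c\exp(s\,z_c)>0$ (e.g.\ the dominant logit is positive), and state it alongside your other mild conditions.
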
 
\begin{corollary} \textbf{(Input Distance Awareness)}
\label{Input Distance Awareness}
If $f_{\boldsymbol{\theta}}$ is constructed using residual blocks (e.g., ResNet), the predictive distribution of $\mathbf{x}^{\star}$ obtained by DAEDL is distance aware in the input space, i.e., $u(\mathbf{x}^{\star}) = \nu \big(\mathbb{E}_{\mathbf{x}' \sim \mathcal{X}_{\tr}} \lVert \mathbf{x}^{\star} - \mathbf{x}' \rVert_{2} \big)$.
\end{corollary}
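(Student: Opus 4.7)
The plan is to chain \cref{Feature Distance Awareness} with the bi-Lipschitz property of $f_{\boldsymbol{\theta}}$ that is enforced by the spectrally normalized residual architecture. Since \cref{Feature Distance Awareness} already supplies a monotonic $\nu$ such that $u(\mathbf{z}^{\star}) = \nu \big( \mathbb{E}_{\mathbf{z}' \sim \mathcal{Z}_{\tr}} \lVert \mathbf{z}^{\star} - \mathbf{z}' \rVert_{2} \big)$, the task reduces to showing that the expected feature-space distance is itself a monotonic function of the expected input-space distance.

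First, I would invoke the standard result used by \citet{liu2020simple} and \citet{van2021feature}: a residual block $h(\mathbf{x}) = \mathbf{x} + g(\mathbf{x})$ whose residual branch $g$ is $\alpha$-Lipschitz with $\alpha < 1$ (as guaranteed by spectral normalization, see \cref{Appendix : Spectral Normalization}) is bi-Lipschitz with constants $1-\alpha$ and $1+\alpha$, by the triangle and reverse-triangle inequalities applied to $h(\mathbf{x})-h(\mathbf{x}') = (\mathbf{x}-\mathbf{x}')+(g(\mathbf{x})-g(\mathbf{x}'))$. Composing such blocks, there exist constants $0 < L_{1} \leq L_{2} < \infty$ with
\begin{align*}
L_{1} \lVert \mathbf{x} - \mathbf{x}' \rVert_{2} \leq \lVert f_{\boldsymbol{\theta}}(\mathbf{x}) - f_{\boldsymbol{\theta}}(\mathbf{x}') \rVert_{2} \leq L_{2} \lVert \mathbf{x} - \mathbf{x}' \rVert_{2}.
\end{align*}
Taking expectations over $\mathbf{x}' \sim \mathcal{X}_{\tr}$ with $\mathbf{z}' = f_{\boldsymbol{\theta}}(\mathbf{x}')$ gives
\begin{align*}
L_{1} \mathbb{E}_{\mathbf{x}' \sim \mathcal{X}_{\tr}} \lVert \mathbf{x}^{\star} - \mathbf{x}' \rVert_{2} \leq \mathbb{E}_{\mathbf{z}' \sim \mathcal{Z}_{\tr}} \lVert \mathbf{z}^{\star} - \mathbf{z}' \rVert_{2} \leq L_{2} \mathbb{E}_{\mathbf{x}' \sim \mathcal{X}_{\tr}} \lVert \mathbf{x}^{\star} - \mathbf{x}' \rVert_{2}.
\end{align*}
Both bounds are monotonically increasing in the expected input distance, so the expected feature distance preserves that monotonicity; composing with $\nu$ then yields $u(\mathbf{x}^{\star}) = \tilde{\nu}\big( \mathbb{E}_{\mathbf{x}' \sim \mathcal{X}_{\tr}} \lVert \mathbf{x}^{\star} - \mathbf{x}' \rVert_{2} \big)$ for a monotonic $\tilde{\nu}$, which is exactly the \textit{distance awareness} property in the input space.

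The main obstacle is a notational one: the corollary as written asks for an equality $u(\mathbf{x}^{\star})=\nu(\cdot)$, whereas bi-Lipschitzness only pins the feature distance between two proportional bounds rather than equating it to a fixed function of the input distance. Following the convention of \citet{liu2020simple}, I would interpret distance awareness in this sandwich sense, which suffices to transport the monotone behavior from feature to input space; in particular, the asymptotic regime $\mathbb{E}_{\mathbf{x}'} \lVert \mathbf{x}^{\star}-\mathbf{x}' \rVert_{2} \to \infty$ forces $\mathbb{E}_{\mathbf{z}'} \lVert \mathbf{z}^{\star}-\mathbf{z}' \rVert_{2} \to \infty$, so the uncertainty behavior established in \cref{Uniform for OOD Data} and \cref{Feature Distance Awareness} transfers intact. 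A secondary subtlety is that residual blocks containing down-sampling or channel-changing projections require those projections to be spectrally normalized with positive singular values in order for $L_{1}>0$; this is the mild standing assumption inherited from prior work in this line.
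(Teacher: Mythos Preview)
Your proposal is correct and follows essentially the same route as the paper: both invoke the bi-Lipschitz (distance-preserving) property of spectrally normalized residual networks from \citet{liu2020simple} (the paper packages this as \cref{lemma4}) and compose it with \cref{Feature Distance Awareness} to transfer distance awareness from the feature space to the input space. If anything, your treatment is more explicit than the paper's, which simply appeals to ``distance-preserving feature extractor $+$ distance-aware output layer $\Rightarrow$ input distance awareness'' via Section~2.2 of \citet{liu2020simple}; your remark that bi-Lipschitzness yields only a sandwich rather than a strict equality is a fair caveat that the paper leaves implicit.
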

Assumptions, proofs, and more detailed explanations of the theoretical results are provided in \cref{Appendix : Detailed Explanation for Theoretical Analysis}. 

\section{Related Works}
\textbf{DBU models.} Distinctions among DBU models have emerged in various aspects, including parameterization, the requirement of OOD data, loss functions, and regularizers.
{KL-PN} \cite{malinin2018predictive} is trained to minimize simultaneously the KL divergence towards a peaked Dirichlet distribution for ID data and a uniform Dirichlet distribution for OOD data. {RKL-PN} \cite{malinin2019reverse} employs the reverse KL divergence instead, arguing that using the KL divergence results in a multimodal target distribution, leading to undesirable uncertainty representation. \citet{nandy2020towards} argued that RKL-PN struggles to distinguish ID data with high aleatoric uncertainty \cite{malinin2018predictive} from OOD data and proposed a novel loss function to maximize the representation gap between ID and OOD data. However, these models require OOD data for training, which is often an unrealistic assumption in practice. 

The Posterior Network ({PostNet}) \cite{charpentier2020posterior} predicts the posterior Dirichlet distribution by utilizing feature space density estimated through Normalizing Flow \cite{rezende2015variational}. The Natural Posterior Network ({NatPN}) \cite{charpentier2021natural} extends the PostNet to arbitrary distributions within the exponential family. However, because these models heavily rely on the feature space density for uncertainty quantification, they may encounter difficulties in practical scenarios where obtaining high-quality feature space is not always feasible.

{EDL} \cite{sensoy2018evidential}, as discussed in \cref{Evidential Deep Learning}, is another notable instance of the DBU model. \citet{haussmann2019bayesian} proposed a Bayesian version of EDL trained with the marginal likelihood regularized by the Probably Approximately Correct bound regularizer. Additionally, \citet{tsiligkaridis2021information} suggested using the $l_{p}$ loss combined with Rényi divergence regularizer. {$\mathcal{I}$-EDL} \cite{deng2023uncertainty} incorporated Fisher information to weigh the importance of each class during training, demonstrating significant performance gains over EDL.
Moreover, alternative DBU models exist, adopting artificial OOD data generation for training  \cite{sensoy2020uncertainty,hu2021multidimensional}, knowledge distillation \cite{malinin2019ensemble, fathullah2022self}, or posterior Dirichlet distribution prediction by variational inference \cite{chen2018variational, joo2020being}.  However, the uncertainty quantified by these models may fail to reflect the distance between the testing example and the training data, hindering their effectiveness in OOD detection. 

\textbf{Other single forward pass uncertainty models.} Alternative models capable of quantifying uncertainty in a single forward pass typically involve i) regularization of the feature space, and ii) uncertainty estimation.  
To obtain the regularized feature space, spectral normalization \cite{liu2020simple, van2021feature, kotelevskii2022nonparametric, mukhoti2023deep} and gradient penalty \cite{van2020uncertainty} have been used. Subsequently, Gaussian process \cite{liu2020simple, van2021feature}, Radial Basis Function network \cite{van2020uncertainty}, GDA \cite{mukhoti2023deep}, and kernel density estimator \cite{kotelevskii2022nonparametric} have been used for estimating uncertainty in a single forward pass. However, these models often require substantial modifications to the model structure or additional computational costs for uncertainty estimation.

\section{Experiments}
\label{Experiments}
\subsection{Experimental Settings}
\textbf{Tasks.} We conducted extensive experiments across various downstream tasks related to uncertainty estimation and classification. Our primary goal was to evaluate whether DAEDL successfully addressed the limitations of conventional EDL, thereby enhancing uncertainty estimation and classification performance. Additionally, we aimed to empirically validate the theoretical advancements presented in \cref{Theoretical Analysis}. The specific questions explored through the experiments are outlined below, with the most relevant properties listed in parentheses. 
\begin{itemize}
\item \textit{Q1}. Does DAEDL improve uncertainty estimation by leveraging feature space density?  {(\cref{Uniform for OOD Data})}
\item \textit{Q2}. Does DAEDL improve classification by using the new parameterization? {(\cref{Bayesian Interpretation})} 
\item \textit{Q3}. Does DAEDL improve confidence calibration? {(\cref{Relationship with AdaTS})} 
\item \textit{Q4}. Does the uncertainty quantified by DAEDL reflect the distance between the testing example and training data? {(\cref{Feature Distance Awareness}, \cref{Input Distance Awareness})}
\end{itemize}
To address \textit{Q1}, we performed {OOD detection (\cref{Exp : OOD Detection})} as a downstream task to evaluate the quality of uncertainty estimation. \textit{Q2} was evaluated by an {image classification (\cref{Exp : Classification and Conf Calibration})} task. For \textit{Q3}, a {confidence calibration (\cref{Exp : Classification and Conf Calibration})} task was executed in two manners: i) conducting a misclassified image detection task, and ii) measuring the Brier score. To address \textit{Q4}, we conducted {distribution shift detection (\cref{Exp : Distribution Shift Detection})}. Similar to \textit{Q1}, OOD detection was applied as a downstream task to assess the quality of the uncertainty estimate. Specifically, we systematically performed a series of OOD detection tasks using progressively generated OOD datasets. These datasets were created by applying various types of corruption to ID data, with the severity of corruption increasing sequentially. Note that each experiment not only evaluates the corresponding question but also relates to other questions. For instance, OOD detection is also related to \textit{Q3} and \textit{Q4}. For each experiment, the mean and standard deviation of the results averaged over five runs were reported. The code for our model is available at \href{https://github.com/TaeseongYoon/DAEDL}{https://github.com/TaeseongYoon/DAEDL}. 

\textbf{Datasets.} To evaluate the OOD detection performance, we used {MNIST} \cite{lecun1998mnist} and {CIFAR-10} \cite{krizhevsky2009learning} as ID datasets. We used {FMNIST} \cite{xiao2017fashion} and {KMNIST} \cite{clanuwat2018deep} as OOD datasets for MNIST. For CIFAR-10, we used {SVHN} \cite{netzer2011reading} and {CIFAR-100} \cite{krizhevsky2009learning} as OOD datasets. We evaluated the classification accuracy and confidence calibration performance of our model using MNIST and CIFAR-10. To evaluate the performance of distribution shift detection, we employed {MNIST-C} \cite{mu2019mnist} and {CIFAR-10-C} \cite{hendrycks2019benchmarking} as the OOD dataset for MNIST and CIFAR-10, respectively. MNIST-C and CIFAR-10-C are datasets created by applying continuous distribution shifts to MNIST and CIFAR-10, respectively. More detailed descriptions of the datasets are provided in   \cref{Appendix : Experimental Details_Datasets}. 
  
\textbf{Implementation.} For a fair comparison, we followed the settings of \citet{charpentier2020posterior} and \citet{deng2023uncertainty}. We used 3 convolutional layers and 3 dense layers for MNIST, and used VGG-16 \cite{simonyan2014very} for CIFAR-10. The optimal hyperparameters were determined through grid search. Additionally, we used a learning rate scheduler and early stopping based on the validation loss. More detailed explanations of the implementation are provided in \cref{Appendix : Experimental Details_ Implementation Details}.

\begin{table*}[htbp]
\centering
\caption{AUPR scores of OOD detection based on aleatoric and epistemic uncertainty. {A} $\rightarrow$ {B} denotes that {A} is employed as an ID dataset, while {B} is utilized as an OOD dataset. {``ALEA."} and {``EPIS."} indicate that the results were obtained by employing aleatoric and epistemic uncertainty measures as an OOD score, respectively. The first four lines, excluding the results of CIFAR-10 $\rightarrow$ CIFAR-100, were obtained from \citet{charpentier2020posterior}. The remaining results were obtained from \citet{deng2023uncertainty}.}
\vspace{0.15in}
\label{OOD Detection}
%\begin{center}
\begin{small}
\begin{sc}
\resizebox{\columnwidth * 2}{!}{%
\begin{tabular}{@{}lcccccccc@{}}
\toprule
 &
  \multicolumn{2}{c}{\textbf{MNIST $\rightarrow$ KMNIST}} &
  \multicolumn{2}{c}{\textbf{MNIST $\rightarrow$ FMNIST}} & 
  \multicolumn{2}{c}{\textbf{CIFAR-10 $\rightarrow$ SVHN}} &
  \multicolumn{2}{c}{\textbf{CIFAR-10 $\rightarrow$ CIFAR-100}} \\ \midrule
& \textbf{Alea.}   & \textbf{Epis.}   & \textbf{Alea.}    & \textbf{Epis.}   & \textbf{Alea.}    & \textbf{Epis.}   & \textbf{Alea.}   & \textbf{Epis.}  \\ \midrule
Dropout  & 94.00 $\pm$ 0.1 & \quad -  & 96.56  $\pm$ 0.2 & \quad -  & 51.39 $\pm$ 0.1 & \quad - & 45.57 $\pm$ 1.0 & \quad - \\
KL-PN  & 92.97 $\pm$ 1.2 & 93.39  $\pm$ 1.0 & 98.44  $\pm$ 1.0 & 98.16  $\pm$ 0.0 & 43.96 $\pm$ 1.9  & 43.23 $\pm$  2.3 & 61.41 $\pm$ 2.8 & 61.53 $\pm$ 3.4 \\
RKL-PN & 60.76 $\pm$ 2.9 & 53.76  $\pm$ 3.4 & 78.45  $\pm$ 3.1 & 72.18 $\pm$ 3.6 & 53.61 $\pm$ 1.1 & 49.37 $\pm$  0.8 & 55.42 $\pm$ 2.6 & 54.74 $\pm$ 2.8 \\
PostNet & 95.75 $\pm$ 0.2 & 94.59  $\pm$ 0.3 & 97.78  $\pm$ 0.2 & 97.24  $\pm$ 0.3 & 80.21 $\pm$ 0.2  & 77.71 $\pm$  0.3 & 81.96 $\pm$ 0.8 & 82.06 $\pm$ 0.8 \\
EDL & 97.02 $\pm$ 0.8 & 96.31  $\pm$ 2.0 & 98.10  $\pm$ 0.4 & 98.08  $\pm$ 0.4 & 78.87 $\pm$  3.5 & 79.12 $\pm$ 3.7  & 84.30 $\pm$ 0.7 & 84.18 $\pm$ 0.7 \\
$\mathcal{I}$-EDL & 98.34 $\pm$ 0.2 & 98.33 $\pm$ 0.2 & 98.89  $\pm$ 0.3 & 98.86  $\pm$ 0.3 & 83.26 $\pm$ 2.4  & 82.96 $\pm$ 2.2  & 85.35 $\pm$ 0.7 & 84.84 $\pm$ 0.6 \\ \midrule
\textbf{DAEDL} & \textbf{99.90 $\pm$ 0.0} & \textbf{99.92 $\pm$ 0.0} & \textbf{99.83 $\pm$ 0.0} & \textbf{99.87 $\pm$ 0.0}  & \textbf{85.50 $\pm$ 1.4} & \textbf{85.54 $\pm$ 1.4} & \textbf{88.16 $\pm$ 0.1} & \textbf{88.19 $\pm$ 0.1} \\ \bottomrule
\end{tabular}
}
\end{sc}
\end{small}
%\end{center}
\end{table*}

\textbf{Baselines.} We compared our model with representative DBU models. Following the setup of \citet{deng2023uncertainty}, our evaluation included the following competing models: {KL-PN} \cite{malinin2018predictive}, {RKL-PN} \cite{malinin2019reverse}, {PostNet} \cite{charpentier2020posterior}, {EDL} \cite{sensoy2018evidential}, and $\mathcal{I}$-{EDL} \cite{deng2023uncertainty}. In \cref{Exp : OOD Detection} and \cref{Exp : Classification and Conf Calibration}, we also included a comparison with {Dropout} \cite{gal2016dropout}, which still demonstrates state-of-the-art uncertainty estimation performance in various tasks. In \cref{Exp : Distribution Shift Detection}, we further compared our model with {MSP} \cite{hendrycks2016baseline}, a standard baseline for distribution shift detection. Following \citet{charpentier2020posterior} and \citet{deng2023uncertainty}, we used the set of data points generated by uniform noise as an OOD dataset for KL-PN and RKL-PN, which require an OOD dataset for training, to ensure a fair comparison.

\subsection{OOD Detection}
\label{Exp : OOD Detection}
We evaluated the OOD detection performance of DAEDL in comparison to baseline methods. Our evaluation consisted of two steps. First, we calculated OOD scores for both the ID test dataset and the OOD dataset using specific uncertainty measures. Following \citet{charpentier2020posterior} and \citet{deng2023uncertainty}, for DBU models, we used $\underset{c}{\max} \{\mathbb{E}_{\boldsymbol{\pi} \sim \Dir(\boldsymbol{\alpha})}[\pi_{c}]\}$ (i.e., maximum expected class probability) to measure aleatoric uncertainty, while we used $\alpha_{0}$ (i.e., precision of the Dirichlet distribution) as the measure of epistemic uncertainty. For {Dropout}, we adopted $\max_{c} \pi_{c}$ (i.e., maximum class probability) as the aleatoric uncertainty measure. Second, using the calculated OOD scores, we computed the area under the precision-recall curve {(AUPR)} score, assigning label 1 to ID and 0 to OOD data, to evaluate the OOD detection performance. 

Based on the AUPR scores reported in \cref{OOD Detection}, DAEDL demonstrates state-of-the-art performance across all evaluated tasks. Specifically, DAEDL outperformed the runner-up method ($\mathcal{I}$-EDL) by noteworthy margins, achieving improvements of {1.56}, {1.59}, {0.94}, and {1.01} on MNIST, as well as {2.24}, {2.58}, {2.81}, and {3.35} on CIFAR-10. OOD detection results with an additional performance metric, the area under the receiver operating characteristic curve (AUROC), are provided in \cref{Appendix : OOD Detection}.

\subsection{Image Classification \& Confidence Calibration}
\label{Exp : Classification and Conf Calibration}
We conducted both image classification and confidence calibration tasks using DAEDL and the baseline methods. For the image classification task, we evaluated performance using the test accuracy. For the confidence calibration task, we evaluated performance in two different ways. 
First, we evaluated the misclassified image detection performance using the AUPR score. Specifically, we first split the ID test dataset into two groups based on the classification results: one with correctly classified data and the other with misclassified data. Then, we calculated the confidence score for each group using a specific confidence measure: $\underset{c}{\max} \{\mathbb{E}_{\boldsymbol{\pi} \sim \Dir(\boldsymbol{\alpha})}[\pi_{c}]\}$ for DBU models and $\max_{c} \pi_{c}$ for Dropout, following \citet{charpentier2020posterior} and \citet{deng2023uncertainty}. Using the calculated confidence scores, we computed the AUPR scores with label 1 for correctly classified data and 0 for misclassified ones, to evaluate the misclassified image detection performance. 
Second, we measured the {Brier score} \cite{brier1950verification}, which is a standard metric used to assess the calibration of the model \cite{gneiting2007strictly}. A lower value of the Brier score indicates better performance. 

As shown in \cref{Confidence Calibration}, DAEDL exhibits state-of-the-art performance in image classification, outperforming the runner-up method ($\mathcal{I}$-EDL) by a significant margin of {1.91}. Moreover, DAEDL achieved state-of-the-art performance in confidence calibration, surpassing the respective runner-up methods, $\mathcal{I}$-EDL and PostNet, by {0.36} in the AUPR score and by {8.57} in the Brier score.
Despite the primary focus of the DBU models on uncertainty estimation, achieving high classification accuracy remains crucial. Therefore, DAEDL holds a distinct advantage by achieving the best performance in both classification and confidence calibration.

\begin{table}[htbp] 
\caption{The results of image classification and confidence calibration on CIFAR-10. 
The first four lines present the results from \citet{charpentier2020posterior}. The test accuracy and AUPR of EDL and $\mathcal{I}$-EDL are obtained from \citet{deng2023uncertainty}. 
}
\vskip 0.15in
\label{Confidence Calibration}
\begin{small}
\begin{center}
\begin{sc}
\begin{tabular}{lccc}
\toprule
& \textbf{Test Acc.} & \textbf{AUPR} &  \textbf{Brier} \\ \midrule
Dropout & 82.84  $\pm$ 0.1 & 97.15 $\pm$ 0.0 &  27.15 $\pm$ 0.2 \\
KL-PN   & 27.46 $\pm$ 1.7  & 50.61 $\pm$ 4.0 & 87.28 $\pm$ 1.0 \\
RKL-PN  & 64.76 $\pm$ 0.3  & 86.11 $\pm$ 0.4 & 54.73 $\pm$ 0.4 \\
PostNet & 84.85 $\pm$ 0.0  & 97.76 $\pm$ 0.0 & 22.84 $\pm$ 0.0 \\
EDL     & 83.55 $\pm$ 0.6  & 97.86 $\pm$ 0.2 & 33.38 $\pm$ 2.0 \\
$\mathcal{I}$-EDL   & 89.20 $\pm$ 0.3  & 98.72 $\pm$ 0.1 & 35.20 $\pm$ 0.8 \\ \midrule
\textbf{DAEDL} & \textbf{91.11 $\pm$ 0.2}  & \textbf{99.08 $\pm$ 0.0} &  \textbf{14.27 $\pm$ 0.2} \\ \bottomrule
\end{tabular}
\end{sc}
\end{center}
\end{small}
\vskip -0.15in
\end{table}

\begin{table*}[htbp]
\caption{AUPR scores of distribution shift detection based on aleatoric uncertainty. $\mathcal{C} \in \{1,2,3,4,5\}$ denotes the severity level of the corruptions in CIFAR-10-C. The results are averaged over 19 different corruptions for each severity level.}
\label{Distribution shift detection}
\vskip 0.15in
\begin{center}
\begin{small}
\begin{sc}
\begin{tabular}{@{}l|l|ccccc@{}}
\toprule
 & \textbf{MNIST $\rightarrow$ MNIST-C} & \multicolumn{5}{c} {\textbf{CIFAR-10 $\rightarrow$ CIFAR-10-C}} \\ \midrule
 & & \quad \  $\mathcal{C} = 1$ & \quad $\mathcal{C} = 2$ & \quad \  $\mathcal{C} = 3$ & \quad $\mathcal{C} = 4$ & \quad \ $\mathcal{C} = 5$ \\ \midrule
MSP & \qquad \ \ 78.54 $\pm$ 0.3 & 56.39 $\pm$ 0.7 & 61.88 $\pm$ 1.1 & 65.86 $\pm$ 1.3 & 69.91 $\pm$ 1.5 & 75.01 $\pm$ 1.8  \\ 
EDL & \qquad \ \ 82.75 $\pm$ 0.8 & 54.76 $\pm$ 0.3 & 59.01 $\pm$ 0.4 & 62.46 $\pm$ 0.5 & 65.87 $\pm$ 0.6 & 70.21 $\pm$ 0.8  \\
$\mathcal{I}$-EDL & \qquad \ \ 86.06 $\pm$ 0.5 & 56.33 $\pm$ 0.2 & 61.52 $\pm$ 0.5 & 65.44 $\pm$ 0.5 & 69.45 $\pm$ 0.5 & 74.56 $\pm$ 0.5  \\ \midrule
\textbf{DAEDL} & \qquad \ \ \textbf{92.43 $\pm$ 0.3} & \textbf{57.89 $\pm$ 0.3} & \textbf{63.23 $\pm$ 0.4} & \textbf{67.53 $\pm$ 0.4} & \textbf{72.21 $\pm$ 0.4} & \textbf{77.74 $\pm$ 0.4} \\ \bottomrule
\end{tabular}
\end{sc}
\end{small}
\end{center}
\vskip -0.15in
\end{table*}

\begin{table*}[htbp] 
%\caption{Ablation study results on CIFAR-10. The results of EDL (DAEDL without EXP, DE, and SN) are from  \citet{deng2023uncertainty}.$

\caption{Ablation study results on CIFAR-10. {``AUPR"} represents the performance of misclassified image detection on the CIFAR-10 dataset. The results of EDL (DAEDL without EXP, DE, and SN) are from \citet{deng2023uncertainty}}

\label{Ablation Study - CIFAR10}
\vskip 0.15in
\begin{center}
\begin{small}
\begin{sc}
\begin{tabular}{@{}lcc|cccccccc@{}}
\toprule
 &  &  &  &  & \multicolumn{2}{c}{\textbf{CIFAR10 $\rightarrow$ SVHN}} & \multicolumn{2}{c}{\textbf{CIFAR-10 $\rightarrow$ CIFAR-100}} \\ \midrule
\textbf{Exp} & \textbf{DE} & \textbf{SN} & \textbf{Test.Acc} & \textbf{AUPR} & \textbf{Alea.} & \textbf{Epis.} & \textbf{Alea.} & \textbf{Epis.} \\ \midrule
\xmark & \xmark & \xmark & 83.55 $\pm$ 0.6 & 97.86 $\pm$ 0.2 & 78.87 $\pm$ 3.5 & 79.12 $\pm$ 3.7 & 84.30  $\pm$ 0.7 & 84.18 $\pm$ 0.7 \\ \midrule
\cmark & \xmark & \xmark & 88.59 $\pm$ 0.4 & 98.42 $\pm$ 0.1& 80.39 $\pm$ 2.0 & 80.45 $\pm$ 1.9 & 83.62 $\pm$ 0.9 & 83.67 $\pm$ 0.9  \\
\cmark & \cmark & \xmark & 88.59 $\pm$ 0.4 & 99.01 $\pm$ 0.0 & 85.02 $\pm$ 0.7  & 85.04 $\pm$ 0.7 & 87.48 $\pm$ 0.2  & 87.50 $\pm$ 0.1 \\ 
\cmark & \xmark & \cmark &  \textbf{91.11 $\pm$ 0.2} & 99.04 $\pm$ 0.0
 & 84.53 $\pm$ 0.9  & 84.55 $\pm$ 0.9 & 87.52 $\pm$ 0.2  & 87.54 $\pm$ 0.2  \\ \midrule
\cmark & \cmark & \cmark & \textbf{91.11 $\pm$ 0.2} & \textbf{99.08 $\pm$ 0.0}
& \textbf{85.50 $\pm$ 1.4} & \textbf{85.54 $\pm$ 1.4} & \textbf{88.16 $\pm$ 0.1} & \textbf{88.19 $\pm$ 0.1} \\ \bottomrule
\end{tabular} 
\end{sc}
\end{small}
\end{center}
\vskip -0.15in
\end{table*}

\subsection{Distribution Shift Detection}
\label{Exp : Distribution Shift Detection}
We conducted distribution shift detection using an OOD dataset created by applying distribution shift (corruption) to the ID dataset. We used the static MNIST-C \cite{mu2019mnist} and CIFAR-10-C \cite{hendrycks2019benchmarking} datasets as OOD datasets, paired with the MNIST and CIFAR-10 datasets as ID datasets, respectively. The static MNIST-C dataset has a fixed severity level of corruption, whereas the CIFAR-10-C dataset has five severity levels of corruption. In the CIFAR-10-C experiments, our objective was to assess the performance of DAEDL in detecting both the presence of corruption and its severity. We measured the performance using AUPR.

As shown in \cref{Distribution shift detection}, DAEDL achieved state-of-the-art performance across all tasks. Specifically, DAEDL outperformed the runner-up method (MSP) by {6.37} in the MNIST-C experiments, and by {1.50}, {1.35}, {1.67}, {2.30}, and {2.73} in the CIFAR-10-C experiments for the severity level $\mathcal{C}$ of 1, 2, 3, 4, and 5, respectively. Notably, the performance of EDL and $\mathcal{I}$-EDL was even worse than MSP, which is based on the softmax model, known for its inefficacy in quantifying uncertainty.  
This inferior performance of EDL models may arise from the unique challenges of distribution shift detection. In typical OOD detection tasks, large distances between OOD testing examples and training data allow for effective discrimination, even if the uncertainty estimates do not accurately reflect the distance. However, in distribution shift detection, OOD data undergo a distribution shift from ID data, with only minor distances from the ID data. Therefore, precise uncertainty estimation is particularly crucial for successful distribution shift detection. 
DAEDL effectively conducted distribution shift detection by leveraging the feature space density of the testing example to account for its distance from the training data. More experimental results, including additional uncertainty measures, performance measures, and graphical representations of the results for each corruption scenario, are provided in \cref{Appendix : Distribution Shift Detection}. 

\subsection{Ablation Study}
\vskip -0.04in
\label{Exp : Ablation Study}
We conducted an ablation study of DAEDL on CIFAR-10 to evaluate the contributions of its key components: i) alternative parameterization ({EXP}), ii) feature space density integration ({DE}), and iii) spectral normalization ({SN}). We conducted OOD detection, image classification, and misclassified image detection, using the same OOD datasets and procedures described in \cref{Exp : OOD Detection} and \cref{Exp : Classification and Conf Calibration}. The results, presented in \cref{Ablation Study - CIFAR10}, demonstrate that all components of DAEDL significantly contributed to enhancing overall performance. 
Specifically, EDL equipped with the proposed parameterization achieved significantly higher classification accuracy, while maintaining comparable uncertainty estimation performance to that of EDL. Moreover, the integration of feature space density resulted in a significant enhancement in uncertainty estimation performance. Furthermore, the application of spectral normalization resulted in an overall improvement in DAEDL's performance. The ablation study results for MNIST are provided in \cref{Appendix : Ablation Study}. 

\section{Conclusion}
We proposed a novel method, DAEDL, which improves the classification and OOD detection performance of EDL. DAEDL achieves this improvement by incorporating the feature space density of testing examples during prediction and introducing a new parameterization of the concentration parameters of the Dirichlet distribution. We demonstrated the effectiveness of DAEDL both theoretically and empirically, showcasing its favorable theoretical properties and state-of-the-art performance in various downstream tasks related to uncertainty estimation and classification. A potential future research direction includes extending DAEDL for regression tasks.

\section*{Acknowledgements}
This work was supported by the National Research Foundation of Korea (NRF) grant funded by the Korea government (MSIT) (2023R1A2C2005453, RS-2023-00218913).

\section*{Impact Statement} 
Accurate uncertainty estimation is crucial for ensuring the safe deployment of AI models, particularly in high-risk domains such as healthcare, finance, and manufacturing. However, existing uncertainty estimation models encounter practical challenges, such as the need for multiple forward passes, substantial modifications to the neural network structure, limited classification performance, and sensitive hyperparameters. In this paper, we introduce DAEDL, a novel approach capable of producing high-quality uncertainty estimates in a single forward pass. DAEDL is comprised of detachable components easily integrable into existing network structures, and excels in classification tasks while operating with non-sensitive hyperparameters. Moreover, DAEDL demonstrates remarkable performance in OOD and distribution shift detection, highlighting its potential impact in addressing real-world challenges effectively. 
Despite its promising advancements, it is essential to acknowledge the limitations of DAEDL. In complex scenarios, EDL outputs and the feature space density may not accurately capture uncertainty, potentially limiting DAEDL's effectiveness. Therefore, practitioners must carefully assess the suitability of DAEDL for their specific problem domains before solely relying on its outcomes.
\bibliography{daedl}
\bibliographystyle{icml2024}
\newpage
\onecolumn
\appendix
\icmltitle{Appendix for the Paper \\ ``Uncertainty Estimation by Density Aware Evidential Deep Learning"}
\section{Additional Explanation for Theoretical Analysis}
\label{Appendix : Detailed Explanation for Theoretical Analysis}
In this section, we provide an additional explanation about the theoretical analysis (\cref{Theoretical Analysis}) of DAEDL. First, we outline the assumptions and lemmas that are utilized throughout the study. Second, we present detailed proofs of the theorems. Third, we provide an additional description of the Bayesian interpretation of DAEDL, which is established in \cref{Bayesian Interpretation}. Finally, we offer additional insights into  \cref{Relationship with AdaTS}, clarifying the relationship between EDL, AdaTS, and DAEDL. 
Here, $\mathcal{X}_{\tr}$ and $\mathcal{Z}_{tr} =\{\mathbf{x} = f_{\hat{\boldsymbol{\theta}}}(\mathbf{x}), \ \forall \mathbf{x} \in \mathcal{X}_{\tr}\}$ denote the sets of training data and their feature representations. In addition, $\mathbf{x}^{\star}$ and $\mathbf{z}^{\star} = f_{\hat{\boldsymbol{\theta}}}(\mathbf{x}^{\star})$ represent the testing example and its feature representation, respectively. 

\subsection{Assumptions \& Lemmas}
We state the assumptions that are required to prove the theorems. 
First, we assume that if the distance between the testing example and the training data diverges in the input space, it will also diverge in the feature space (\cref{assumption1}). \cref{assumption1} is used to prove \cref{Uniform for OOD Data}. Second, we assume that the normalized feature space density (i.e., $s(\mathbf{x}^{\star})$) is monontonically decreasing with respect to the distance between the testing example and the training data in the feature space (i.e., $\mathbb{E}_{\mathbf{z}' \sim \mathcal{Z}_{\tr}} \lVert \mathbf{z}^{\star} - \mathbf{z}' \rVert_{2}$) (\cref{assumption2}). \cref{assumption2} is used to prove \cref{Feature Distance Awareness} and \cref{Input Distance Awareness}. 
\begin{assumption}
\label{assumption1}
If $\mathbb{E}_{\mathbf{x}' \sim \mathcal{X}_{\tr}} \lVert \mathbf{x}^{\star} - \mathbf{x}' \rVert_{2} \rightarrow  \infty$, then $\mathbb{E}_{\mathbf{z}' \sim \mathcal{Z}_{\tr}} \lVert \mathbf{z}^{\star} - \mathbf{z}' \rVert_{2} \rightarrow \infty$. 
\end{assumption}
\begin{assumption}
\label{assumption2}
$s(\mathbf{x}^{\star})$ is monotonically decreasing with respect to $\mathbb{E}_{\mathbf{z}' \sim \mathcal{Z}_{\tr}} \lVert \mathbf{z}^{\star} - \mathbf{z}' \rVert_{2}$.
\end{assumption}

We provide the lemmas that are required to prove the theorems.
% Lemma 1 : The upper bound of feature space distance
\begin{lemma}
\label{lemma1}
If spectral normalization is applied to $f_{\boldsymbol{\theta}}$, then $\mathbb{E}_{\mathbf{z}' \sim \mathcal{Z}_{tr}} \lVert \mathbf{z}^{\star}-\mathbf{z}' \rVert_{2}$ is bounded by $\mathbb{E}_{\mathbf{x}' \sim \mathcal{X}_{tr}} \lVert \mathbf{x}^{\star}-\mathbf{x}' \rVert_{2}$. 
\end{lemma}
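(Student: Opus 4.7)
The plan is to exploit the fact that spectral normalization constrains the spectral norm of each weight matrix to be at most one, so that when composed with 1-Lipschitz activations (e.g., ReLU, leaky ReLU) and with residual or convolutional blocks handled in the standard way, the entire feature extractor $f_{\boldsymbol{\theta}}$ is Lipschitz continuous with some finite constant $L$ with respect to the Euclidean norm. This is the standard justification given in \citet{liu2020simple, mukhoti2023deep} for why spectral normalization provides a bi-Lipschitz-style regularized feature space, and it is precisely the property that makes a pointwise distance comparison between input and feature space possible.

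First, I would invoke the Lipschitz property of $f_{\hat{\boldsymbol{\theta}}}$ at the level of individual pairs: for any $\mathbf{x}^{\star}, \mathbf{x}' \in \mathbb{R}^{D}$,
\begin{equation*}
\lVert \mathbf{z}^{\star} - \mathbf{z}' \rVert_{2} = \lVert f_{\hat{\boldsymbol{\theta}}}(\mathbf{x}^{\star}) - f_{\hat{\boldsymbol{\theta}}}(\mathbf{x}') \rVert_{2} \le L\, \lVert \mathbf{x}^{\star} - \mathbf{x}' \rVert_{2}.
\end{equation*}
This inequality holds pointwise for every $\mathbf{x}' \in \mathcal{X}_{\tr}$ and the corresponding $\mathbf{z}' = f_{\hat{\boldsymbol{\theta}}}(\mathbf{x}') \in \mathcal{Z}_{\tr}$, so the pairing between the empirical distribution over $\mathcal{X}_{\tr}$ and the pushforward empirical distribution over $\mathcal{Z}_{\tr}$ is the natural coupling to use.

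Next, I would take expectations of both sides with respect to $\mathbf{x}' \sim \mathcal{X}_{\tr}$. By linearity and monotonicity of expectation under the coupling $\mathbf{z}' = f_{\hat{\boldsymbol{\theta}}}(\mathbf{x}')$,
\begin{equation*}
\mathbb{E}_{\mathbf{z}' \sim \mathcal{Z}_{\tr}} \lVert \mathbf{z}^{\star} - \mathbf{z}' \rVert_{2} \;=\; \mathbb{E}_{\mathbf{x}' \sim \mathcal{X}_{\tr}} \lVert f_{\hat{\boldsymbol{\theta}}}(\mathbf{x}^{\star}) - f_{\hat{\boldsymbol{\theta}}}(\mathbf{x}') \rVert_{2} \;\le\; L \cdot \mathbb{E}_{\mathbf{x}' \sim \mathcal{X}_{\tr}} \lVert \mathbf{x}^{\star} - \mathbf{x}' \rVert_{2},
\end{equation*}
which is exactly the claimed bound. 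The conclusion then follows immediately.

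The only genuinely delicate point, rather than a hard obstacle, is justifying that spectral normalization as actually implemented yields a global Lipschitz constant: one must argue that each affine layer has operator norm bounded by the spectral-norm constraint, that standard activations are 1-Lipschitz, and (if $f_{\hat{\boldsymbol{\theta}}}$ contains residual blocks, as required in \cref{Input Distance Awareness}) that each residual block $\mathbf{x} \mapsto \mathbf{x} + r(\mathbf{x})$ has Lipschitz constant at most $1 + \operatorname{Lip}(r)$. I would defer these architectural details to the spectral normalization appendix and simply cite the Lipschitz constant $L$ produced by that construction, since the lemma's statement requires only its existence, not its explicit value.
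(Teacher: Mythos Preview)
Your proposal is correct and follows essentially the same approach as the paper: invoke the Lipschitz continuity of $f_{\hat{\boldsymbol{\theta}}}$ under spectral normalization to get a pointwise bound, then pass to expectations via the natural coupling $\mathbf{z}' = f_{\hat{\boldsymbol{\theta}}}(\mathbf{x}')$. The only cosmetic difference is that the paper asserts $L=1$ directly (citing \citet{miyato2018spectral}) rather than carrying a generic Lipschitz constant, and it compresses your expectation step into the phrase ``generalizing this inequality.''
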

\begin{proof}
If spectral normalization is applied to $f_{\boldsymbol{\theta}}$, $f_{
\boldsymbol{\theta}}$ is 1-Lipschitz continuous \cite{miyato2018spectral}. In other words, for data points $ \mathbf{x}_{1}, \mathbf{x}_{2} \in \mathbb{R}^{D}$ and their corresponding  feature representations $\mathbf{z}_{1}, \mathbf{z}_{2} \in \mathbb{R}^{H}$, where $\mathbf{z}_{1} =  f_{\boldsymbol{\theta}}(\mathbf{x}_{1})$ and $\mathbf{z}_{2} = f_{\boldsymbol{\theta}}(\mathbf{x}_{2})$, the inequality $\lVert \mathbf{z}_{1} - \mathbf{z}_{2} \rVert_{2} \leq \lVert \mathbf{x}_{1} - \mathbf{x}_{2} \rVert_{2}$ holds. 
By generalizing this inequality, we obtain :
\begin{align*}
\mathbb{E}_{\mathbf{z}' \sim \mathcal{Z}_{tr}}\lVert \mathbf{z}^{\star}-\mathbf{z}' \rVert_{2} \leq  \mathbb{E}_{\mathbf{x}' \sim \mathcal{X}_{tr}}\lVert \mathbf{x}^{\star}-\mathbf{x}' \rVert_{2}. 
\end{align*}    
\end{proof}

% Lemma 2 : Feature space density of OOD 
\begin{lemma} \label{lemma2} (Lemma 5 of \citet{charpentier2021natural}) Let $p(\mathbf{z}^{\star};\hat{\boldsymbol{\alpha}})$ be parameterized with a Gaussian Mixture Model (GMM). Then $\mathbb{E}_{\mathbf{z}' \sim \mathcal{Z}_{tr}} \lVert \mathbf{z}^{\star}-\mathbf{z}' \rVert_{2} \rightarrow \infty$ implies that $p(\mathbf{z}^{\star} ;\hat{\boldsymbol{\alpha}}) \rightarrow 0$.
\end{lemma}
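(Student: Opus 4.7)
The plan is to reduce the mixture density to the behavior of each individual Gaussian component and exploit the fact that the training set (and hence all GMM parameters) is fixed. Writing
\begin{equation*}
p(\mathbf{z}^{\star};\hat{\boldsymbol{\alpha}}) = \sum_{c=1}^{C} \hat{\omega}_{c}\,\mathcal{N}(\mathbf{z}^{\star}\mid\hat{\boldsymbol{\mu}}_{c},\hat{\Sigma}_{c}),
\end{equation*}
it suffices to show each component density tends to zero, since the mixture is a finite convex combination with fixed weights.

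The first step is to translate divergence of the expected distance into divergence of $\mathbf{z}^{\star}$ itself. Because $\mathcal{Z}_{tr}$ is a fixed finite set, the triangle inequality gives
\begin{equation*}
\lVert \mathbf{z}^{\star} \rVert_{2} - \mathbb{E}_{\mathbf{z}'\sim\mathcal{Z}_{tr}}\lVert \mathbf{z}' \rVert_{2} \;\leq\; \mathbb{E}_{\mathbf{z}'\sim\mathcal{Z}_{tr}}\lVert \mathbf{z}^{\star}-\mathbf{z}' \rVert_{2} \;\leq\; \lVert \mathbf{z}^{\star} \rVert_{2} + \mathbb{E}_{\mathbf{z}'\sim\mathcal{Z}_{tr}}\lVert \mathbf{z}' \rVert_{2},
\end{equation*}
and the outer term on each side is a fixed finite constant, so the hypothesis forces $\lVert \mathbf{z}^{\star} \rVert_{2}\to\infty$. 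Since every $\hat{\boldsymbol{\mu}}_{c}$ is a fixed vector (being an empirical average of fixed training features), another triangle inequality gives $\lVert \mathbf{z}^{\star}-\hat{\boldsymbol{\mu}}_{c}\rVert_{2} \geq \lVert \mathbf{z}^{\star}\rVert_{2} - \lVert \hat{\boldsymbol{\mu}}_{c}\rVert_{2} \to \infty$ for every $c\in[C]$.

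The second step converts Euclidean divergence into exponential decay. Assuming each $\hat{\Sigma}_{c}$ is positive definite, the standard eigenvalue bound
\begin{equation*}
(\mathbf{z}^{\star}-\hat{\boldsymbol{\mu}}_{c})^{T}\hat{\Sigma}_{c}^{-1}(\mathbf{z}^{\star}-\hat{\boldsymbol{\mu}}_{c}) \;\geq\; \frac{\lVert \mathbf{z}^{\star}-\hat{\boldsymbol{\mu}}_{c} \rVert_{2}^{2}}{\lambda_{\max}(\hat{\Sigma}_{c})}
\end{equation*}
implies that the Mahalanobis exponent of the $c$th Gaussian diverges to $-\infty$. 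Since the normalizing constant $(2\pi)^{-H/2}|\hat{\Sigma}_{c}|^{-1/2}$ is fixed, each component density tends to zero, and summing over the finitely many $c$ with fixed $\hat{\omega}_{c}$ yields $p(\mathbf{z}^{\star};\hat{\boldsymbol{\alpha}})\to 0$.

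The main obstacle I anticipate is the corner case in which some empirical covariance $\hat{\Sigma}_{c}$ fails to be strictly positive definite (for instance when $N_{c}<H$ or when class-$c$ features lie on a proper subspace); in that case $\lambda_{\max}(\hat{\Sigma}_{c})$ is still finite but the density is singular in some directions. The standard fix, already implicit in practical GDA implementations, is a small ridge regularization $\hat{\Sigma}_{c}+\varepsilon I$, which preserves the Mahalanobis-to-Euclidean comparison and hence the exponential decay. Apart from this technicality, the argument is entirely elementary, relying only on the triangle inequality, a symmetric-matrix eigenvalue bound, and the explicit form of the Gaussian density.
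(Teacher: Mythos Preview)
Your argument is correct. The two triangle-inequality steps reduce the hypothesis to $\lVert \mathbf{z}^{\star}-\hat{\boldsymbol{\mu}}_{c}\rVert_{2}\to\infty$ for each $c$, and the eigenvalue bound $x^{T}\hat{\Sigma}_{c}^{-1}x \geq \lVert x\rVert_{2}^{2}/\lambda_{\max}(\hat{\Sigma}_{c})$ then forces each Gaussian component to decay exponentially; the finite convex combination therefore vanishes. Your caveat about singular $\hat{\Sigma}_{c}$ is appropriate and the ridge fix is the standard resolution.

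Regarding comparison with the paper: the paper does not supply its own proof of this lemma at all. It simply invokes Lemma~5 of \citet{charpentier2021natural} and uses the conclusion as a black box inside the proof of Theorem~\ref{Uniform for OOD Data}. Your write-up is therefore strictly more than what the paper provides here: you give a self-contained elementary argument (triangle inequality plus a Rayleigh-quotient bound) rather than deferring to the external reference. The benefit of your route is that it makes explicit the only structural assumption actually needed---fixed, positive-definite component covariances---whereas citing the general NatPN lemma obscures that the GDA/GMM case is almost immediate.
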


% Lemma 3 : Monotonicity of uncertainty
\begin{lemma} \label{lemma3} (Modified from Proposition 5.5. of \citet{bui2023density}) $u(\mathbf{z}^{\star})$ is monotonically decreasing with respect to $s(\mathbf{x}^{\star})$ on the interval $(0,1]$. 
\end{lemma}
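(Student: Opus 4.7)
The proof plan is to leverage \cref{Relationship with AdaTS}, which identifies DAEDL's expected class probability with an adaptive temperature-scaled softmax: $\bar{\boldsymbol{\pi}} = \sigma(s(\mathbf{x}^\star)\,\mathbf{z}(\mathbf{x}^\star))$. Under this identification, $s$ acts as the inverse temperature, so the lemma reduces to the classical fact that lowering the softmax temperature (equivalently, raising $s$) sharpens the predictive distribution and thereby reduces any reasonable uncertainty functional. The argument therefore splits into two pieces: verify the sharpening property, and then translate it into monotonic decrease of $u$.

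To execute this concretely I would fix the representative choice $u(\mathbf{z}^\star) = H(\bar{\boldsymbol{\pi}})$, the Shannon entropy, and compute $du/ds$ directly. Writing $p_c(s) = \sigma_c(s\,\mathbf{z})$, the softmax derivative identity $dp_c/ds = p_c\bigl(z_c - \mathbb{E}_p[\mathbf{z}]\bigr)$ together with $\log p_c = s z_c - \log Z(s)$ and the identity $\sum_c p_c(z_c - \mathbb{E}_p[\mathbf{z}]) = 0$ collapses $du/ds = -\sum_c(1 + \log p_c)\,dp_c/ds$ to
\begin{align*}
\frac{du}{ds} = -s\,\mathrm{Var}_p[\mathbf{z}],
\end{align*}
which is non-positive on $(0,1]$ and strictly negative unless all logits coincide. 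An analogous derivation handles alternative measures such as the SL uncertainty mass $u = C/\alpha_0(s)$ with $\alpha_0(s) = \sum_c \exp(s z_c)$, since $d\alpha_0/ds = \alpha_0(s)\,\mathbb{E}_p[\mathbf{z}]$, whose sign is controlled by the same temperature-sharpening mechanism.

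The main obstacle I anticipate is formulating the claim uniformly across different uncertainty measures. A clean way to eliminate case-by-case analysis is to restrict $u$ to Schur-concave functionals on the simplex: the family $\{\sigma(s\,\mathbf{z})\}_{s}$ is ordered by majorization in $s$, so every such $u$ (entropy, Gini index, complement of the top probability, SL uncertainty mass) is automatically monotonically decreasing in $s$. The boundary at $s \to 0^+$ is covered by the limiting behavior $\bar{\boldsymbol{\pi}} \to (1/C)\mathbf{1}$, confirming that the supremum of $u$ is approached at the left endpoint of $(0,1]$ and that monotonicity extends continuously throughout the interval, as required.
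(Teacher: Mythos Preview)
Your core argument is correct and more self-contained than the paper's. The paper's proof simply fixes $u$ to be predictive entropy, observes that $u(\mathbf{z}^\star)=\mathcal{H}[\sigma(g_{\hat{\boldsymbol{\phi}}}(\mathbf{z}^\star)\cdot s(\mathbf{x}^\star))]$, and then delegates the actual monotonicity argument to Proposition~5.5 of \citet{bui2023density} by substituting $s(\mathbf{x}^\star)$ for their density term. You instead carry out the computation explicitly, obtaining $du/ds=-s\,\mathrm{Var}_p[\mathbf{z}]\le 0$, which is presumably what the cited proposition does under the hood. So your route is essentially the same mechanism, but made explicit rather than imported by reference; this buys self-containment and transparency at the cost of a few more lines. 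Your additional majorization/Schur-concavity remark is a genuine extension beyond what the paper claims, since the paper only treats entropy.

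One caveat: your aside about the SL uncertainty mass $u=C/\alpha_0(s)$ does not quite go through as stated. You write $d\alpha_0/ds=\alpha_0\,\mathbb{E}_p[\mathbf{z}]$, but this quantity can be negative whenever the logits have negative mean, so $C/\alpha_0$ is \emph{not} guaranteed to be monotone in $s$ without further assumptions on the logits. It also falls outside your Schur-concavity umbrella, since $\alpha_0$ is not a functional of the normalized probabilities alone. This is tangential to the lemma as the paper states and proves it (entropy only), but you should either drop that example or add the needed sign assumption.
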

\begin{proof}
Suppose that we employ the predictive entropy (i.e., $u(\mathbf{z}^{\star}) = \mathcal{H}[p(y|\mathbf{z}^{\star})]$) as the uncertainty measure for DAEDL. Then, the uncertainty of $\mathbf{x}^{\star}$ quantified by DAEDL can be expressed as follows:
$u(\mathbf{z}^{\star}) = \mathcal{H}[\sigma(g_{\hat{\boldsymbol{\phi}}}(\mathbf{z}^{\star}) \times s(\mathbf{x}^{\star})]$. 
This formulation mirrors the uncertainty presented in the proof of Proposition 5.5 in \citet{bui2023density}, differing only in the method for obtaining the normalized feature space density. Therefore, substituting $s(\mathbf{x}^{\star})$ for  $p(\mathbf{z}^{\star};\boldsymbol{\alpha})$ in the proof of the corresponding theorem allows us to conclude the proof.
\end{proof}

% Lemma 4 : Distance preserving feature extractor
\begin{lemma} \label{lemma4} (Proposition 1 of \citet{liu2020simple}) Consider a hidden mapping $f_{\boldsymbol{\theta}} : \mathcal{X} \rightarrow \mathcal{H}$ with residual architecture $f_{\boldsymbol{\theta}} = f_{L-1} \circ \cdots f_{2} \circ f_{1}$, where $f_{l}(\mathbf{x}) = \mathbf{x} + g_{l}(\mathbf{x})$. If for $0 < \alpha \leq 1$, all $g_{l}$'s are $\alpha$-Lipschitz, i.e., $\lVert g_{l}(\mathbf{x}) - g_{l}(\mathbf{x}') \rVert_{\mathcal{H}} \leq \alpha \lVert \mathbf{x}-\mathbf{x}' \rVert_{\mathcal{X}}, \ \ \forall(\mathbf{x},\mathbf{x}') \in \mathcal{X}$. Then, 
\begin{align*}
L_{1}  \lVert \mathbf{x}-\mathbf{x}' \rVert_{\mathcal{X}} \leq \lVert f_{\boldsymbol{\theta}}(\mathbf{x}) - f_{\boldsymbol{\theta}}(\mathbf{x}') \rVert_{\mathcal{H}} \leq L_{2} \lVert \mathbf{x}-\mathbf{x}' \rVert_{\mathcal{X}},
\end{align*}
where $L_{1} = (1-\alpha)^{L-1}$ and $L_{2} = (1+\alpha)^{L-1}$, i.e., $f_{\boldsymbol{\theta}}$ is distance preserving.
\end{lemma}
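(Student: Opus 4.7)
The plan is to prove the bi-Lipschitz bound by first establishing per-layer estimates on each residual block $f_l(\mathbf{x}) = \mathbf{x} + g_l(\mathbf{x})$ using the triangle and reverse triangle inequalities, and then composing them by induction on the layer index. The $\alpha$-Lipschitz assumption on each $g_l$, combined with the restriction $0 < \alpha \le 1$, is precisely what is needed so that the lower-bound constant $(1-\alpha)$ is non-negative; this is what makes the per-layer reverse bound nontrivial and propagates through the composition.

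For the per-layer step, I would fix arbitrary inputs $\mathbf{x},\mathbf{x}'$ to layer $l$ and write
\begin{align*}
f_l(\mathbf{x}) - f_l(\mathbf{x}') = (\mathbf{x}-\mathbf{x}') + \bigl(g_l(\mathbf{x}) - g_l(\mathbf{x}')\bigr).
\end{align*}
Applying the triangle inequality gives $\lVert f_l(\mathbf{x})-f_l(\mathbf{x}') \rVert \le (1+\alpha)\lVert \mathbf{x}-\mathbf{x}' \rVert$, while the reverse triangle inequality gives $\lVert f_l(\mathbf{x})-f_l(\mathbf{x}') \rVert \ge \lVert \mathbf{x}-\mathbf{x}' \rVert - \lVert g_l(\mathbf{x})-g_l(\mathbf{x}') \rVert \ge (1-\alpha)\lVert \mathbf{x}-\mathbf{x}' \rVert$. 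These two inequalities together constitute the per-block bi-Lipschitz estimate with constants $1-\alpha$ and $1+\alpha$.

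Next, I would induct on the depth. Define $F_k = f_k \circ \cdots \circ f_1$, so that $F_1 = f_1$ and $F_{L-1} = f_{\boldsymbol{\theta}}$. The base case $k=1$ is exactly the per-layer estimate above. For the inductive step, assume that $(1-\alpha)^{k-1}\lVert \mathbf{x}-\mathbf{x}'\rVert \le \lVert F_{k-1}(\mathbf{x})-F_{k-1}(\mathbf{x}')\rVert \le (1+\alpha)^{k-1}\lVert \mathbf{x}-\mathbf{x}'\rVert$. Applying the per-layer bound to $f_k$ at the inputs $F_{k-1}(\mathbf{x})$ and $F_{k-1}(\mathbf{x}')$ multiplies both sides by factors in $[1-\alpha, 1+\alpha]$, yielding the estimate at depth $k$. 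Iterating up to $k = L-1$ produces the claimed constants $L_1 = (1-\alpha)^{L-1}$ and $L_2 = (1+\alpha)^{L-1}$.

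The only subtle point, and the one worth flagging as a minor obstacle, is the lower bound: for the reverse triangle inequality step to yield a useful estimate one needs $\alpha \le 1$, and for the induction to chain the lower constants multiplicatively one must verify that each $F_{k-1}(\mathbf{x})$ and $F_{k-1}(\mathbf{x}')$ are legitimate inputs to $f_k$ in the same normed space, which is built into the residual architecture since $f_l$ maps a space to itself. Everything else is mechanical application of the triangle inequality and induction, so no further machinery beyond the stated hypotheses is required.
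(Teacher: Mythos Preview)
Your argument is correct: the per-layer bi-Lipschitz bound via the triangle and reverse triangle inequalities, followed by induction over the composition, is exactly the standard proof, and your remark that $\alpha \le 1$ is needed for the lower bound to be nontrivial is the right sanity check. The paper does not supply its own proof of this lemma---it is quoted verbatim as Proposition~1 of \citet{liu2020simple}---and the argument you have written is essentially the one given in that reference.
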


\subsection{Proofs of Theorems}
We prove \cref{Relationship with AdaTS}, \cref{Uniform for OOD Data}, \cref{Feature Distance Awareness}, and \cref{Input Distance Awareness} sequentially.
\paragraph{Proof for \cref{Relationship with AdaTS}} 
Let $\hat{\boldsymbol{\theta}}$ and $\hat{\boldsymbol{\phi}}$ be the optimal parameters of the feature extractor and the classifier, respectively, which are obtained by training.
For  $\forall c \in [1,2,\cdots C]$, the predictive distribution of testing example $\mathbf{x}^{\star}$ obtained by DAEDL is expressed as follows: 
\begin{equation} 
\begin{split}
p(y=c|\mathbf{x}^{\star};\hat{\boldsymbol{\theta}}, \hat{\boldsymbol{\phi}}) &= \int p(y=c|\boldsymbol{\pi}) p(\boldsymbol{\pi}|\boldsymbol{\alpha}, \mathbf{x}^{\star};\hat{\boldsymbol{\theta}}, \hat{\boldsymbol{\phi}}) d \boldsymbol{\pi} \\
&= \int \pi_{c} \Dir(\boldsymbol{\pi}|\boldsymbol{\alpha}, \mathbf{x}^{\star}; \hat{\boldsymbol{\theta}}, \hat{\boldsymbol{\phi}} ) d\boldsymbol{\pi} \\
&= \mathbb{E}_{\boldsymbol{\pi} \sim \Dir(\boldsymbol{\alpha})}[\pi_{c}]  \\
&= \frac{\alpha_{c}}{\sum_{c=1}^{C} \alpha_{c}}. \label{pred1}
\end{split}
\end{equation}
Additionally, the concentration parameters of DAEDL can be expressed as follows: 
\begin{equation} \label{alpha}
\boldsymbol{\alpha}(\mathbf{x}^{\star}) = \exp (\mathbf{z}(\mathbf{x}^{\star}) / T(\mathbf{x}^{\star})), 
\end{equation}
where $\mathbf{z}(\mathbf{x}^{\star}) = g_{\hat{\boldsymbol{\phi}}}(f_{\hat{\boldsymbol{\theta}}}(\mathbf{x}^{\star}))$ and 
$T(\mathbf{x}^{\star}) = 1 / s(\mathbf{x}^{\star})$.
Plugging Eq.\eqref{alpha} into Eq.\eqref{pred1}, the predictive distribution of $\mathbf{x}^{\star}$ obtained by DAEDL can be expressed as follows:
\begin{align} \label{pred2}
p(y|\mathbf{x}^{\star}) = \Cat(\bar{\boldsymbol{\pi}}), \quad \bar{\boldsymbol{\pi}} = \sigma \left(\mathbf{z}(\mathbf{x}^{\star}) / \ T(\mathbf{x}^{\star}) \right).
\end{align}
Here, we omitted the parameters $\hat{\boldsymbol{\theta}}$ and $\hat{\boldsymbol{\phi}}$ for notational simplicity. From Eq.\eqref{pred2}, we can conclude that the predictive distribution of $\mathbf{x}^{\star}$ obtained by DAEDL aligns with the adaptive temperature scaled softmax model.

\paragraph{Proof for \cref{Uniform for OOD Data}}
First, $\mathbb{E}_{\mathbf{x}' \sim \mathcal{X}_{tr}} \lVert \mathbf{x}_{\ood}^{\star}-\mathbf{x}' \rVert_{2} \rightarrow \infty$ implies that $\mathbb{E}_{\mathbf{z}' \sim \mathcal{Z}_{tr}} \lVert \mathbf{z}_{\ood}^{\star}-\mathbf{z}' \rVert_{2} \rightarrow \infty$ (\cref{assumption1}). Second, $\mathbb{E}_{\mathbf{z}' \sim \mathcal{Z}_{tr}} \lVert \mathbf{z}_{\ood}^{\star}-\mathbf{z}' \rVert_{2} \rightarrow \infty$ implies that $p(\mathbf{z}^{\star}_{\ood} = f_{\hat{\boldsymbol{\theta}}}(\mathbf{z}^{\star}_{\ood})) \rightarrow 0$ (\cref{lemma2}). Therefore, the following inequality holds for $\mathbf{z}^{\star}_{\ood}$ :
\begin{align*}\log p(\mathbf{z}_{\ood}^{\star}) \leq \underset{\mathbf{x} \in \mathcal{X}_{tr}} \min \{\log p(f_{\hat{\boldsymbol{\theta}}}(\mathbf{x})) \}.
\end{align*}
By the definition of the normalizing function $s$, $p(\mathbf{z}^{\star}_{\ood} = f_{\boldsymbol{\theta}}(\mathbf{z}^{\star}_{\ood})) \rightarrow 0$ implies that $s(\mathbf{x}_{\ood}^{\star}) \rightarrow 0$ and $T(\mathbf{x}_{\ood}^{\star}) \rightarrow \infty$. Plugging these into Eq.\eqref{alpha} and Eq.\eqref{pred2}, the concentration parameters and predictive distribution of $\mathbf{x}_{\ood}^{\star}$ obtained by DAEDL are derived as follows:
\begin{align*}
\boldsymbol{\alpha}(\mathbf{x}_{\ood}^{\star}) \rightarrow \textbf{1}, \quad p(y|\mathbf{x}_{\ood}^{\star}) \rightarrow \mathcal{U}\{1,C\}.
\end{align*} 

\paragraph{Proof for \cref{Feature Distance Awareness}}
The predictive distribution of the testing example $\mathbf{x}^{\star}$ obtained by DAEDL can be expressed as follows: 
\begin{align*}
p(y|\mathbf{z}^{\star}) = \sigma(g_{\boldsymbol{\phi}}(\mathbf{z}^{\star}) \times s(\mathbf{x}^{\star})),
\end{align*} 
where $\mathbf{z}^{\star} = f_{\boldsymbol{\theta}}(\mathbf{x}^{\star})$ is the feature representation of $\mathbf{x}^{\star}$.
Suppose that we employ the predictive entropy (i.e., $u(\mathbf{z}^{\star}) = \mathcal{H}[p(y|\mathbf{z}^{\star})]$) as the uncertainty measure for DAEDL. Then,  the uncertainty of $\mathbf{x}^{\star}$ quantified by DAEDL can be expressed as follows: 
\begin{align*}
u(\mathbf{z}^{\star}) = \mathcal{H}[(\sigma(g_{\boldsymbol{\phi}}(\mathbf{z}^{\star}) \times s(\mathbf{x}^{\star})].
\end{align*}
First, $s(\mathbf{x}^{\star})$ is monotonically decreasing with respect to $\mathbb{E}_{\mathbf{z}' \sim \mathcal{Z}_{\tr}} \lVert \mathbf{z}^{\star} - \mathbf{z}' \rVert_{2}$ (\cref{assumption2}). Second, $u(\mathbf{z}^{\star})$ is monotonically decreasing with respect to $s(\mathbf{x}^{\star})$ (\cref{lemma3}). Combining these results, it follows that $u(\mathbf{z}^{\star})$ is monotonically increasing with respect to $\mathbb{E}_{\mathbf{z}' \sim \mathcal{Z}_{\tr}} \lVert \mathbf{z}^{\star} - \mathbf{z}' \rVert_{2}$. In other words, 
\begin{align*}
u(\mathbf{z}^{\star}) = \nu \big(\mathbb{E}_{\mathbf{z}' \sim \mathcal{Z}_{\tr}} \lVert \mathbf{z}^{\star} - \mathbf{z}' \rVert_{2} \big)
\end{align*}
holds for a monotonic function $\nu$. Therefore, the predictive distribution of $\mathbf{x}^{\star}$ obtained by DAEDL is \textit{distance aware} in the feature space. 

\paragraph{Proof for \cref{Input Distance Awareness}}
First, if spectral normalization is applied to $f_{\boldsymbol{\theta}}$, and $f_{\boldsymbol{\theta}}$ is constructed by the residual blocks, $f_{\boldsymbol{\theta}}$  is \textit{distance preserving} (\cref{lemma4}). Second, the uncertainty of $\mathbf{x}^{\star}$ obtained by DAEDL is monotonically increasing with respect to the feature space distance (\cref{Feature Distance Awareness}). In other words, the output layer is \textit{distance aware}. Consequently, the combination of \textit{distance preserving} feature extractor and \textit{distance aware} output layers leads to the \textit{distance awareness} in the input space (Section 2.2 of \citet{liu2020simple}). Therefore, we can conclude that if $f_{\boldsymbol{\theta}}$ is constructed by the residual blocks, the predictive distribution of $\mathbf{x}^{\star}$ obtained by DAEDL is \textit{distance aware} in the input space. 

\subsection{Additional Explanation about \cref{Bayesian Interpretation}}
\label{Appendix : Bayesian Interpretation of DAEDL}
We provide a detailed explanation about \cref{Bayesian Interpretation}. First, we interpret the DBU models in the Bayesian context within the framework of an \textit{input-dependent Dirichlet-Categorical model} \cite{charpentier2020posterior}. Second, we analyze the limitations of the conventional prior specifications utilized in DBU models. Finally, we interpret DAEDL under the same framework to underscore its strength. In particular, DAEDL can be interpreted as predicting a posterior distribution of input-dependent Dirichlet Categorical model using an improper prior $\boldsymbol{\pi} \sim \Dir(\mathbf{0})$. This approach mitigates the challenge of prior specification that occurs in common DBU models, allowing our model to learn the appropriate posterior Dirichlet distribution from the data. 

\paragraph{DBU model under the input-dependent Dirichlet Categorical model framework.}
In a Bayesian context, DBU models can be interpreted under the framework of an \textit{input-dependent Dirichlet-Categorical model} \cite{charpentier2020posterior}. Intuitively, the goal of the DBU model is to predict the posterior Dirichlet distribution for the data using a neural network. The concentration parameters of the prior Dirichlet distribution for these models are determined based on the prior belief about the class counts. When there is no prior information, the parameters of the prior are conventionally set as $\boldsymbol{\alpha}_{\prior}(\mathbf{x}_{i}) = \mathbf{1}$. To estimate the concentration parameters of the posterior Dirichlet distribution, we need to obtain the class counts from the observations. However, in the absence of such observations, DBU models predict \textit{pseudo-observations} $\{\tilde{y}^{(j)}_{i}\}_{j=1}^{N}$ for each data point $\mathbf{x}_{i}$, utilizing a neural network. More specifically, DBU models predict \textit{pseudo-counts} (i.e., class counts of the pseudo-observations) $\alpha
^{(c)}_{\data} = \sum_{j=1}^{N} \mathbbm{1}_{\{\tilde{y}^{(j)}_{j} = c\}}$ for each class $\forall c \in [C]$. Then, the concentration parameters of the posterior Dirichlet distribution can be obtained in a closed form, leveraging the conjugacy of the Dirichlet and Categorical distributions. In summary, the DBU model can be expressed as a Bayesian model as follows: 
\begin{align*}
\textrm{\textbf{Prior}} \quad \qquad \qquad \boldsymbol{\pi} &\sim \Dir(\boldsymbol{\alpha}_{\prior}(\mathbf{x}_{i})), \\
\textrm{\textbf{Likelihood}} \quad \{\tilde{y}^{(j)}_{i}\}_{j=1}^{N}| \boldsymbol{\pi} &\sim \Cat(\boldsymbol{\pi}). \\
\textrm{\textbf{Posterior}} \quad \boldsymbol{\pi}|\{\tilde{y}^{(j)}_{i}\}_{j=1}^{N} &\sim \Dir(\boldsymbol{\alpha}_{\post}(\mathbf{x}_{i})), \quad \boldsymbol{\alpha}_{\post}(\mathbf{x}_{i}) = \boldsymbol{\alpha}_{\prior}(\mathbf{x}_{i}) + \boldsymbol{\alpha}_{\data}(\mathbf{x}_{i}),
\end{align*}
where $\boldsymbol{\pi}$ is a class probability and the pseudo-counts are computed as $\alpha^{(c)}_{\data}(\mathbf{x}_{i}) = \sum_{j=1}^{N} \mathbbm{1}_{\{\tilde{y}^{(j)}_{j} = c\}}, \forall c \in [C]$. 
The core aspect of the DBU model involves predicting pseudo-counts using a neural network and computing the posterior Dirichlet distribution. Specifically, the pseudo-counts are obtained as follows:
\begin{align*}
\boldsymbol{\alpha}_{\data}(\mathbf{x}_{i}) = h(g_{\boldsymbol{\phi}}(f_{\boldsymbol{\theta}}(\mathbf{x}_{i})),
\end{align*}
where $f_{\boldsymbol{\theta}} : \mathbb{R}^{D} \rightarrow \mathbb{R}^{H}$, $g_{\boldsymbol{\phi}} : \mathbb{R}^{H} \rightarrow \mathbb{R}^{C}$, $\boldsymbol{\theta}$, $\boldsymbol{\phi}$, and $h$ are the feature extractor, classifier, parameters of the feature extractor, parameters of the classifier, and activation function, respectively. With the conventional choice for the concentration parameters of the prior Dirichlet distribution ($\boldsymbol{\alpha}_{\prior}(\mathbf{x}_{i}) = \mathbf{1}$), the concentration parameters of the posterior Dirichlet distribution of the DBU models can be expressed as follows: 
\begin{align*}
\boldsymbol{\alpha}_{\post}(\mathbf{x}_{i}) = \mathbf{1} + h(g_{\boldsymbol{\phi}}(f_{\boldsymbol{\theta}}(\mathbf{x}_{i}))).
\end{align*}
Representative DBU models, including PostNet \cite{charpentier2020posterior}, NatPN \cite{charpentier2021natural}, EDL \cite{sensoy2018evidential}, and $\mathcal{I}$-EDL \cite{deng2023uncertainty}, can be interpreted within the framework explained above. The comparison of the DBU models under the input-dependent Dirichlet-Categorical model framework is detailed in \cref{Parameter Option Comparison}. \begin{table*}[hbtp]
\caption{The comparison between typical DBU models \cite{sensoy2018evidential, charpentier2020posterior, charpentier2021natural, deng2023uncertainty} and DAEDL in the Bayesian context. $N_{c}$ denotes the number of observations for each class \cite{charpentier2020posterior}, and $N_{H}$ is a hyperparameter that corresponds to the certainty budget \cite{charpentier2021natural}. $\boldsymbol{\beta}$ denotes the parameter of the density estimator.}
\label{Parameter Option Comparison}
\begin{center}
\begin{sc}
\resizebox{\columnwidth}{!}{%
\begin{tabular}{@{}llll@{}}
\toprule
 & \textbf{Prior} & \textbf{Pseudo-Counts} & \textbf{Posterior} \\ \midrule
PostNet & $\alpha^{(c)}_{\prior} = 1$  & $\alpha^{(c)}_{\data} =  N_{c} \times  p(\mathbf{z}^{\star};\boldsymbol{\beta}^{(c)})$  & $\alpha^{(c)}_{\post} = 1 + N_{c} \times p(\mathbf{z}^{\star};\boldsymbol{\beta}^{(c)})$  \\
NatPN & $\alpha^{(c)}_{\prior} = 1$  & $\alpha^{(c)}_{\data} =  N_{H} \times  p(\mathbf{z}^{\star};\boldsymbol{\beta}) \times (g_{\boldsymbol{\phi}}(\mathbf{z}^{\star}))_{c}$  & $\alpha^{(c)}_{\post}= 1 + N_{H} \times p(\mathbf{z}^{\star};\boldsymbol{\beta}) \times  (g_{\boldsymbol{\phi}}(\mathbf{z}^{\star}))_{c}$  \\
EDL & $\alpha^{(c)}_{\prior} = 1$ & $\alpha^{(c)}_{\data} = \ReLU((g_{\boldsymbol{\phi}}(\mathbf{z}^{\star}))_{c})$ & $\alpha^{(c)}_{\post} = 1 + \ReLU((g_{\boldsymbol{\phi}}(\mathbf{z}^{\star}))_{c})$  \\
$\mathcal{I}$-EDL & $\alpha^{(c)}_{\prior} = 1$   &$\alpha^{(c)}_{\data} = \Softplus((g_{\boldsymbol{\phi}}(\mathbf{z}^{\star}))_{c})$  & $\alpha^{(c)}_{\post} = 1 + \Softplus((g_{\boldsymbol{\phi}}(\mathbf{z}^{\star}))_{c})$  \\ \midrule
DAEDL (Training) & $\alpha^{(c)}_{\prior} = 0$ &$\alpha^{(c)}_{\data} = \exp \left((g_{\boldsymbol{\phi}}(\mathbf{z}^{\star}))_{c}\right)$  & $\alpha^{(c)}_{\post} = \exp \left((g_{\boldsymbol{\phi}}(\mathbf{z}^{\star}))_{c}\right)$  \\
DAEDL (Prediction) & $\alpha^{(c)}_{\prior} = 0$ &$\alpha^{(c)}_{\data} = \exp \left((g_{\boldsymbol{\phi}}(\mathbf{z}^{\star}))_{c} \times s(\mathbf{z}^{\star}) \right)$  & $\alpha^{(c)}_{\post} = \exp \left((g_{\boldsymbol{\phi}}(\mathbf{z}^{\star}))_{c} \times s(\mathbf{z}^{\star};\boldsymbol{\beta})) \right)$  \\ \bottomrule
\end{tabular}
}
\end{sc}
\end{center}
\end{table*}

\paragraph{Limitation of conventional parameterization of DBU models.}
\label{problem of conventional parameterization}
For a typical Dirichlet-Categorical model in Bayesian statistics, a commonly employed non-informative prior is a uniform Dirichlet distribution (i.e., $\Dir(\mathbf{1})$). Thus, it is natural for models like PostNet and NatPN to adopt this prior. Moreover, EDL models \cite{sensoy2018evidential, deng2023uncertainty} can also be interpreted as using a uniform Dirichlet prior implicitly. \cref{DBU Bayesian} establishes that typical DBU models can be interpreted as using $\boldsymbol{\pi} \sim \Dir(\mathbf{1})$ under this framework. However, this seemingly reasonable choice of the concentration parameters of the prior Dirichlet distribution in the DBU models poses inherent challenges. Notably, the mechanism for obtaining the posterior distribution differs from typical Bayesian models. Obtaining the parameters of the posterior in the DBU models involves pseudo-counts, which are estimated by a neural network, rather than actual class counts. From \cref{Parameter Option Comparison}, we can see that the range of the pseudo-count is $[0,\infty)$. Consequently, several failure cases arise due to the unconstrained range of the pseudo-counts. Below, we illustrate some representative failure cases.

\begin{theorem}
\label{DBU Bayesian}
\textbf{\textit{(Bayesian Interpretation of DBU models)}}
In the Bayesian context, typical DBU models can be interpreted as predicting an input-dependent posterior distribution of the Dirichlet-Categorical model with a uniform prior $\boldsymbol{\pi} \sim \Dir(\mathbf{1})$. 
\end{theorem}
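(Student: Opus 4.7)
The plan is to exhibit the uniform prior $\boldsymbol{\pi} \sim \Dir(\mathbf{1})$ explicitly in the posterior decomposition of each representative DBU model, using the conjugacy structure of the Dirichlet-Categorical model that is reviewed immediately before the statement. Recall that under the input-dependent Dirichlet-Categorical framework, if the prior is $\Dir(\boldsymbol{\alpha}_{\prior}(\mathbf{x}_i))$ and the pseudo-observations are drawn from $\Cat(\boldsymbol{\pi})$ with class pseudo-counts $\boldsymbol{\alpha}_{\data}(\mathbf{x}_i)$, conjugacy gives a posterior $\Dir(\boldsymbol{\alpha}_{\prior}(\mathbf{x}_i) + \boldsymbol{\alpha}_{\data}(\mathbf{x}_i))$. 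Thus it suffices, for each model in the list (PostNet, NatPN, EDL, $\mathcal{I}$-EDL), to rewrite the concentration parameters they output in the form $\mathbf{1} + \boldsymbol{\alpha}_{\data}(\mathbf{x}_i)$ for some nonnegative pseudo-count vector, and to read off the prior as $\mathbf{1}$.

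The first step is to record the posterior parameters displayed in \cref{Parameter Option Comparison}: PostNet uses $\alpha^{(c)}_{\post} = 1 + N_c\, p(\mathbf{z}^{\star};\boldsymbol{\beta}^{(c)})$, NatPN uses $\alpha^{(c)}_{\post} = 1 + N_H\, p(\mathbf{z}^{\star};\boldsymbol{\beta})\,(g_{\boldsymbol{\phi}}(\mathbf{z}^{\star}))_c$, EDL uses $\alpha^{(c)}_{\post} = 1 + \ReLU((g_{\boldsymbol{\phi}}(\mathbf{z}^{\star}))_c)$, and $\mathcal{I}$-EDL uses $\alpha^{(c)}_{\post} = 1 + \Softplus((g_{\boldsymbol{\phi}}(\mathbf{z}^{\star}))_c)$. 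In each case the trailing term is nonnegative by construction (density values, $\ReLU$, $\Softplus$ are all $\geq 0$), so it is admissible as a pseudo-count. The second step is to identify these trailing terms with $\alpha_{\data}^{(c)}(\mathbf{x}_i)$ and the leading $1$ with $\alpha_{\prior}^{(c)}(\mathbf{x}_i)$, so that the decomposition $\boldsymbol{\alpha}_{\post}(\mathbf{x}_i) = \boldsymbol{\alpha}_{\prior}(\mathbf{x}_i) + \boldsymbol{\alpha}_{\data}(\mathbf{x}_i)$ with $\boldsymbol{\alpha}_{\prior}(\mathbf{x}_i) = \mathbf{1}$ matches the generic conjugacy formula. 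Finally, since $\Dir(\mathbf{1})$ is the uniform distribution on the simplex, this constant prior corresponds precisely to the uniform Dirichlet prior $\boldsymbol{\pi} \sim \Dir(\mathbf{1})$.

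I do not expect a serious technical obstacle; the claim is essentially a pattern-matching statement combined with the uniqueness of the conjugate decomposition. The only mildly delicate point is justifying that the decomposition into prior and pseudo-counts is the \emph{intended} one rather than an arbitrary algebraic split (for instance, one could formally write $1 + \alpha_{\data}^{(c)} = 2 + (\alpha_{\data}^{(c)} - 1)$); I would address this by noting that the pseudo-count must be nonnegative and must vanish in the absence of evidence from the neural network (i.e.\ when $g_{\boldsymbol{\phi}}$ produces no activation, as in the OOD limit), which uniquely picks out $\boldsymbol{\alpha}_{\prior}(\mathbf{x}_i) = \mathbf{1}$. With these remarks, the theorem follows directly from the tabulated forms of the concentration parameters.
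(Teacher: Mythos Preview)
Your proposal is correct and follows essentially the same approach as the paper: the paper does not give a separate formal proof of this theorem, but supports it implicitly via the framework discussion and \cref{Parameter Option Comparison}, where each model's posterior is displayed in the form $1 + \alpha_{\data}^{(c)}$ with a nonnegative data term, exactly as you describe. Your additional remark about the uniqueness of the prior/pseudo-count split (via nonnegativity and vanishing in the no-evidence limit) is a nice touch that goes slightly beyond what the paper makes explicit.
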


\underline{(i) $C=3, \ \ \boldsymbol{\alpha}_{\data}(\mathbf{x}_{1}^{\star}) = [0.1,0.01,0.01]$}

Consider the scenario where the magnitude of the concentration parameters of the prior Dirichlet distribution dominates the pseudo-counts in the classification task with $C=3$. Suppose that $\mathbf{x}_{1}^{\star}$ is a highly likely ID testing example that corresponds to the first class, with the pseudo-counts computed as $\boldsymbol{\alpha}_{\data}(\mathbf{x}_{1}^{\star}) = [0.1,0.01,0.01]$. Then, the concentration parameters of the posterior are estimated as $\boldsymbol{\alpha}_{\post}(\mathbf{x}_{1}^{\star}) = [1.1, 1.01, 1.01]$. However, the maximum expected 
class probability is computed as ${1.1}/{3.12}$, yielding a value close to ${1}/{3}$. Then, the highly likely ID data point will be misclassified as OOD. 

\underline{ii) $C=10, \quad \boldsymbol{\alpha}_{\data}(\mathbf{x}_{2}^{\star}) = [10,0,0,\cdots,0]$} 

Consider the scenario where pseudo-counts exist only for one class in the classification task with $C=10$. Suppose that $\mathbf{x}_{2}^{\star}$ is a highly likely ID testing example that corresponds to the first class, with the pseudo-counts computed as $\boldsymbol{\alpha}_{\data}(\mathbf{x}_{2}^{\star}) = [10,0,0,\cdots,0]$. Then, the concentration parameters of the posterior is estimated as $\boldsymbol{\alpha}_{\post}(\mathbf{x}_{2}) = [11,1,1,\cdots,1]$. Subsequently, the maximum expected class probability is computed as $11/20 = 0.55$. This result is counter-intuitive, as typical classification models (e.g., softmax) usually output the maximum class probability close to 1 for a highly ID data point like $\mathbf{x}_{2}^{\star}$. 

These counter-intuitive results in the expected class probability occur due to the inherent challenge of achieving the appropriate balance in the magnitude between the concentration parameters of the prior Dirichlet distribution and the pseudo-counts. To overcome this problem, it is necessary to employ the optimal concentration parameters of the prior Dirichlet distribution that maintains the right balance with the pseudo-counts. In the absence of specific information, it is reasonable to set the concentration parameter of the prior Dirichlet distribution for each class to be the same (i.e., $\boldsymbol{\alpha}_{\prior} = \alpha \mathbf{1}$), where $\alpha$ represents the magnitude and $\mathbf{1} \in \mathbb{R}^{C}$. However, determining the optimal magnitude $\alpha$ is non-trivial, because of the unconstrained range of the magnitude of the pseudo-counts. While an optimal value of $\alpha$ in the DBU models might exist in some contexts, $\alpha = 1$ is applied in most tasks. However, as illustrated above, numerous counter-intuitive results may occur in the expected class probability attributed to setting $\alpha = 1$. We hypothesize that prior misspecification in DBU models can be a significant factor that contributes to the limited classification performance of DBU models. DAEDL successfully addresses this limitation by employing an alternative prior that eliminates the need for this nonessential balancing.

\paragraph{Bayesian interpretation of DAEDL.}
We interpret DAEDL within the input-dependent Dirichlet-Categorical model framework to provide insights into how DAEDL resolves the limitation in the conventional parameterization of DBU models. As established in \cref{Bayesian Interpretation}, DAEDL can be interpreted as predicting the input-dependent posterior distribution of the Dirichlet-Categorical model with an improper prior $\boldsymbol{\pi} \sim \Dir(\mathbf{0})$. DAEDL essentially eliminates the challenge of determining the optimal parameter value of the prior by setting it to zero.  
Although the improper prior is not a valid probability distribution, the posterior Dirichlet distribution is valid if all pseudo-counts are positive, which holds in practice for neural network models. The improper prior exhibits favorable theoretical properties, such as being a uniform prior in the log-scale of the parameter (i.e., $\log \boldsymbol{\pi}$) \cite{gelman1995bayesian}.

To summarize, while typical DBU models can be interpreted as using a uniform prior $\boldsymbol{\pi} \sim \Dir(\mathbf{1})$ in the Bayesian context, DAEDL can be viewed as employing an improper prior $\boldsymbol{\pi} \sim \Dir(\mathbf{0})$. While opting for a uniform prior is standard in typical models, it poses challenges within the context of DBU models, due to the challenge of achieving an appropriate balance between the parameter of the prior and pseudo-counts. This challenge may lead to counter-intuitive results in the expected class probabilities, which could be a potential reason for the limited performance of DBU models in classification tasks. Therefore, within the DBU model framework, we assert that employing an improper prior $\Dir(\mathbf{0})$ is a more practical choice. 

\subsection{Additional Explanation about \cref{Relationship with AdaTS}}
\label{Appendix : Additional Explanation about thm1}
We provide an additional insight into \cref{Relationship with AdaTS}. To begin, we define the EDL equipped with the proposed parameterization as EDL\textsuperscript{+} for notational convenience. Despite the existence of other models related to adaptive temperature scaling, here we focus on Adaptive Temperature Scaling (AdaTS) \cite{joy2023sample}, which serves as a representative model.

We first clarify the relationship between the four models: i) Softmax, ii) EDL\textsuperscript{+}, iii) AdaTS, and iv) DAEDL. \cref{Fig3} illustrates the relationship of these models graphically. The orange arrow signifies the enhancement in the model calibration performance achieved by adaptive temperature scaling. The blue arrow denotes the improvement in the uncertainty estimation performance of the model achieved by adopting the EDL structure. As demonstrated in \cref{Fig3}, DAEDL can be interpreted in two manners: i) calibrated version of EDL\textsuperscript{+}, and ii) AdaTS with uncertainty estimation ability. First, DAEDL can be interpreted as the calibrated version of EDL\textsuperscript{+}, accomplished by dividing the logits by the sample-dependent temperature before applying the exponential activation function. Given the demonstrated effectiveness of \textit{adaptive temperature scaling} in improving model calibration and OOD detection \cite{balanya2022adaptive, joy2023sample, krumpl2024ats}, we can expect that DAEDL will exhibit similar benefits and improvements over EDL\textsuperscript{+}. Second, DAEDL can be interpreted as AdaTS with uncertainty estimation capability. Given the superior uncertainty estimation performance of the EDL\textsuperscript{+} compared to the softmax model, DAEDL is expected to exhibit improved uncertainty estimation performance over AdaTS. In summary, DAEDL can be interpreted as a model created by integrating enhancements from two distinct directions to the softmax model, to improve both calibration and uncertainty estimation performance. 
\begin{figure}[hbtp]
\includegraphics[scale = 0.38]{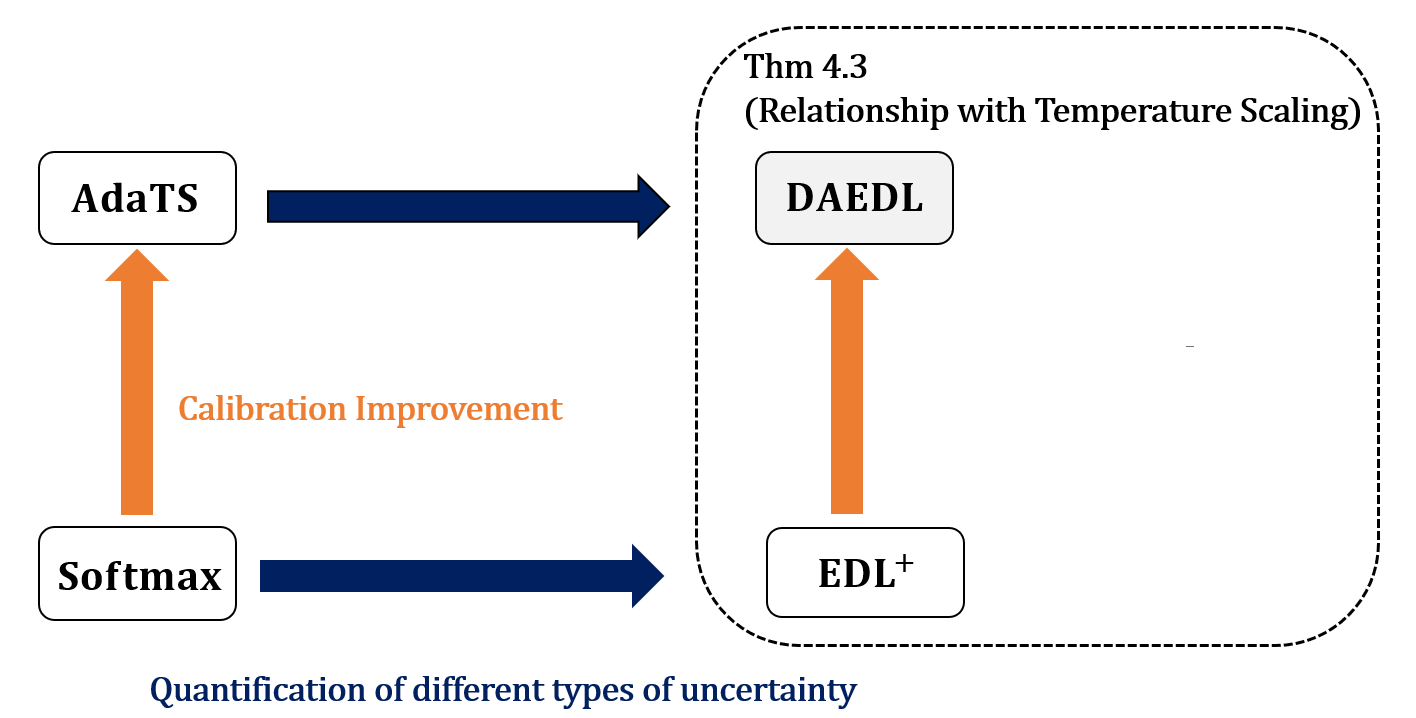}
\caption{Relationship between Softmax, EDL\textsuperscript{+}, AdaTS, and DAEDL}
\label{Fig3}
\end{figure}

\newpage

\section{Algorithm}
\label{Appendix : Algorithm}
We present the algorithms for DAEDL. Specifically, the algorithms for training, density estimation, and prediction are provided in \cref{alg:train}, \cref{alg:density}, and \cref{alg:prediction}, respectively. These algorithms are intuitive and easy to implement.
\begin{algorithm}[hbtp]
   \caption{DAEDL Training}
   \label{alg:train}
\begin{algorithmic}
   \STATE {\bfseries Input:} Training data $\mathcal{D}_{tr} = \{(\mathbf{x}_{i},y_{i})\}_{i=1}^{N}$, initial model parameters $\{\boldsymbol{\theta}, \boldsymbol{\phi}\}$, maximum epoch $M$, learning rate $\eta$, batch size $B$, regularization parameter $\lambda$
   \FOR{$i=1$ {\bfseries to} $M$}
   \STATE SGD update $\bigl\{ \boldsymbol{\theta} = \{W^{(l)}, b^{(l)}\}_{l=1}^{L} , \ \boldsymbol{\phi} \bigl\}$.
   \STATE Apply spectral normalization to $ \{W^{(l)}\}_{l=1}^{L}$.
   \ENDFOR
   \STATE {\bfseries Output:} Trained model parameter $\{\hat{\boldsymbol{\theta}}, \hat{\boldsymbol{\phi}}\}$
\end{algorithmic}
\end{algorithm}
\begin{algorithm}[hbtp]
   \caption{DAEDL Density Estimation}
   \label{alg:density}
\begin{algorithmic}
   \STATE {\bfseries Input:} Training data $\mathcal{D}_{tr} = \{(\mathbf{x}_{i},y_{i})\}_{i=1}^{N}$, trained feature extractor parameter $\hat{\boldsymbol{\theta}}$
   \FOR{$c=1$ {\bfseries to} $C$}
   \STATE $\hat{\omega}_{c} \leftarrow \frac{N_{c}}{N}$, \ \ $N_{c} = \sum_{i=1}^{N} \mathbbm{1}_{\{y_{i} = c\}}$
   \STATE $\hat{\boldsymbol{\mu}}_{c} \leftarrow \frac{1}{N_{c}} \sum_{\{i:y_{i} = c\}}^{} f_{\hat{\boldsymbol{\theta}}}(\mathbf{x}_{i})$
   \STATE $\hat{\Sigma}_{c} \leftarrow \frac{1}{N_{c}-1} \sum_{\{i:y_{i} = c\}}^{} (f_{\hat{\boldsymbol{\theta}}}(\mathbf{x}_{i}) - \hat{\boldsymbol{\mu}}_{c}) (f_{\hat{\boldsymbol{\theta}}}(\mathbf{x}_{i}) - \hat{\boldsymbol{\mu}}_{c})^{T}$
   \ENDFOR
   \STATE {\bfseries Output:} GDA parameter $\hat{\boldsymbol{\omega}} = \{\hat{\omega}_{c}\}_{c=1}^{C}, \boldsymbol{\hat{\mu}} = \{\hat{\boldsymbol{\mu}}_{c}\}_{c=1}^{C}, \hat{\boldsymbol{\Sigma}} = \{\hat{\Sigma}_{c}\}_{c=1}^{C}$
\end{algorithmic}
\end{algorithm}
\begin{algorithm}[hbtp]
   \caption{DAEDL Prediction}
   \label{alg:prediction}
\begin{algorithmic}
   \STATE {\bfseries Input:} Testing example $\mathbf{x}^{\star}$, trained model parameter $\{\hat{\boldsymbol{\theta}}, \hat{\boldsymbol{\phi}}\}$, GDA parameter $\boldsymbol{\hat{\omega}}, \ \boldsymbol{\hat{\mu}},\ \boldsymbol{\hat{\Sigma}}$
   \STATE Estimate the log feature space density: $\log p(\mathbf{z}^{\star} = f_{\boldsymbol{\theta}}(\mathbf{x}^{\star})|\hat{\boldsymbol{\omega}}, \hat{\boldsymbol{\mu}}, \hat{\Sigma}) $
   \STATE Compute the normalized feature space density: $s(\mathbf{x}^{\star}) = \Clip \left( \frac{\log p(\mathbf{z}^{\star} = f_{\hat{\boldsymbol{\theta}}}(\mathbf{x}^{\star}))-d_{\min}}{d_{\max} - d_{\min}} \right)$ 
   \STATE Compute the concentration parameters: $\boldsymbol{\alpha}_{\hat{\boldsymbol{\theta}}, \hat{\boldsymbol{\phi}}}(\mathbf{x}^{\star}) = \exp \left(g_{\hat{\boldsymbol{\phi}}}(f_{\hat{\boldsymbol{\theta}}}(\mathbf{x}^{\star})) \times s(\mathbf{x}^{\star})\right)$
   \STATE {\bfseries Output:} Concentration parameters $\boldsymbol{\alpha}_{\hat{\boldsymbol{\theta}}, \hat{\boldsymbol{\phi}}}(\mathbf{x}^{\star})$
\end{algorithmic}
\end{algorithm}

\section{Experimental Details}
\label{Appendix : Experimental Details}
In this section, we provide the details about our experiments. First, we provide a detailed description of the datasets employed in our experiments. Second, we provide the implementation details. 

\subsection{Datasets}
\label{Appendix : Experimental Details_Datasets}
Following \citet{charpentier2020posterior} and \citet{deng2023uncertainty}, we mainly use two image classification datasets : (i) {MNIST} \cite{lecun1998mnist} and (ii) {CIFAR-10} \cite{krizhevsky2009learning} as our ID dataset. For MNIST, {Kuzushiji-MNIST (KMNIST)} \cite{clanuwat2018deep} and {FashionMNIST (FMNIST)} \cite{xiao2017fashion} serve as OOD datasets. Regarding CIFAR-10, {Street View House Numbers (SVHN)} \cite{netzer2011reading} and {CIFAR-100} \cite{krizhevsky2009learning} were employed as OOD datasets. Furthermore, to assess our model's capability for detecting distribution shifts, we utilize {MNIST-C} \cite{mu2019mnist} and {CIFAR-10-C} \cite{hendrycks2019benchmarking}, which are created by introducing distribution shifts (corruption) to MNIST and CIFAR-10, respectively. A detailed description of each dataset is provided below. 

\paragraph{MNIST} \cite{lecun1998mnist} is a dataset comprising grayscale images of handwritten digits (0 through 9). MNIST is widely used as a benchmark dataset for evaluating machine learning algorithms. It comprises 60,000 training images and 10,000 testing images. Each image is represented as a $1 \times 28 \times 28$ tensor. We partitioned the training samples into a training set and a validation set with a ratio of $0.8:0.2$.
\paragraph{Kuzushiji-MNIST (KMNIST)} \cite{clanuwat2018deep} serves as a more challenging alternative of the MNIST dataset. Similar to MNIST, it comprises grayscale images represented by $1 \times 28 \times 28$ tensor and includes 60,000 training images and 10,000 testing images. However, KMNIST features a handwritten Japanese Hiragana script, originally designed for researching the recognition of historical Japanese characters. In our experiments, we employed KMNIST as an OOD dataset for MNIST. 

\paragraph{FashionMNIST (FMNIST)} \cite{xiao2017fashion} serves as another alternative to the MNIST dataset. Similar to MNIST and KMNIST, it comprises grayscale images represented by $1 \times 28 \times 28$ tensor and includes 60,000 training images and 10,000 testing images. However, the FMNIST dataset features images of various fashion items. Specifically, it consists of 10 classes, each representing specific fashion items such as T-shirt, trouser, pullover, and others. In our experiments, we employed FMNIST as an OOD dataset for MNIST.

\paragraph{CIFAR-10} \cite{krizhevsky2009learning} is a dataset comprising color images of different animals and objects. CIFAR-10 is widely used as a benchmark dataset for evaluating machine learning algorithms. It comprises of 50,000 training images and 10,000 testing images. The dataset is divided into ten classes: airplane, automobile, bird, cat, deer, dog, frog, horse, ship, and truck. Each image in the dataset is represented as $3 \times 32 \times 32$ tensor. We partitioned the training samples into a training set and a validation set with a ratio of $0.95:0.05$.

\paragraph{Street View House Numbers (SVHN)} \cite{netzer2011reading} is a dataset consisting of cropped images of house numbers from Google Street View. The dataset is divided into 10 classes, 1 for each digit. There are 73,257 training images, 26,032 testing images, and 531,131 additional images. In our experiments, we employed SVHN as an OOD dataset for CIFAR-10.

\paragraph{CIFAR-100} \cite{krizhevsky2009learning} is an extension of the CIFAR-10 dataset. It comprises 50,000 training images and 10,000 testing images, divided among 100 different classes. Each image in this dataset is represented as a $3 \times 32 \times 32$ tensor. In our experiments, we employed CIFAR-100 as an OOD dataset for CIFAR-10.

\paragraph{MNIST-C} \cite{mu2019mnist} is a dataset created by applying 15 different types of corruptions (distribution shift) to the MNIST test set. These corruptions include shot noise, impulse noise, glass blur, motion blur, shear, scale, rotate, brightness, translate, stripe, fog, spatter, dotted line, zigzag, and canny edges. In our experiments, we utilized a static MNIST-C dataset, where severity levels are fixed \cite{mu_2019_3239543}, as an OOD dataset for the distribution shift detection task.

\paragraph{CIFAR-10-C} \cite{hendrycks2019benchmarking} is a dataset generated by applying 19 different types of corruption (distribution shift) to the CIFAR-10 test set. These corruptions include Gaussian noise, shot noise, impulse noise, defocus blur, glass blur, motion blur, zoom blur, snow, frost, fog, brightness, contrast, elastic transform, pixelate, jpeg compression, speckle noise, Gaussian blur, splatter, and saturate. Each corruption is applied at five different severity levels $\mathcal{C} \in \{1,2,3,4,5\}$. In our experiments, we employed CIFAR-10-C as an OOD dataset for the distribution shift detection task. In essence, conducting distribution shift detection for CIFAR-10-C is akin to performing OOD detection with 95 (19 $\times$ 5) different OOD datasets. 

\subsection{Implementation Details}
\label{Appendix : Experimental Details_ Implementation Details}
For a fair comparison, we followed \citet{charpentier2020posterior} and  \citet{deng2023uncertainty} regarding the choice of the backbone. Specifically, we implemented a configuration of 3 convolutional layers and 3 dense layers ({ConvNet}) when utilizing MNIST as the ID dataset. When CIFAR-10 served as the ID dataset, we used {VGG-16}. For VGG-16, dropout with a rate of $0.5$ was applied. FMNIST and KMNIST were used as OOD datasets for MNIST, while SVHN and CIFAR-100 were employed as OOD datasets for CIFAR-10. To prevent overfitting, early stopping based on the validation loss was implemented for both datasets. In the case of MNIST, training extended up to 50 epochs with a batch size of 64, and for CIFAR-10, we trained up to 100 epochs with the same batch size. The Adam optimizer and LambdaLR scheduler were employed for both datasets. The learning rate ($\eta$) and regularization parameter ($\lambda$) were determined through a grid search, yielding the optimal values of 
$(\eta, \lambda) = (10^{-3}, 5 \times 10^{-2})$. Notably, DAEDL was robust to the hyperparameter choice and only required minimal tuning. A summary of the implementation details is presented in \cref{Implementation Details}.
\begin{table*}[hbtp]
\caption{Implementation details of our experiments. $B$, $p_{\drop}$, $\eta$, $lr_{\lambda}$, $\lambda$, and $T_{\max}$ denote the batch size, dropout rate, learning rate, parameter of the scheduler, regularization parameter, and maximum epoch,  respectively. }
\label{Implementation Details}
\vskip 0.15in
\begin{center}
\begin{small}
\begin{sc}
\begin{tabular}{@{}lllllllllll@{}}
\toprule
ID Dataset & Backbone & Optimizer & Scheduler & $B$ & $p_{\drop}$ & $\eta$ & $lr_{\lambda}$ & $\lambda$ & $T_{\max}$ \\ \midrule
MNIST  & ConvNet & Adam & LambdaLR & 64 & \ \ - & $10^{-3}$ & $0.95^{\epochs}$ & $5 \times 10^{-2}$ & 50             \\
CIFAR-10 & VGG-16 & Adam & LambdaLR & 64 & 0.5 & $10^{-3}$
& $0.95^{\epochs}$ & $5 \times 10^{-2}$ & 100 \\ \bottomrule
\end{tabular}
\end{sc}
\end{small}
\end{center}
\vskip -0.15in
\end{table*}

\section{Additional Experimental Results}
\subsection{Additional Results in OOD Detection (\cref{Exp : OOD Detection})}
\label{Appendix : OOD Detection}
\cref{OOD_DETECTION_AUROC} presents the area under the receiver operating characteristic curve (AUROC) scores measured for the OOD detection performance. We can observe that DAEDL outperformed the competitors in all tasks. 
\begin{table*}[hbtp]
\centering
\caption{AUROC scores of OOD detection based on aleatoric and epistemic uncertainty. The results of EDL and $\mathcal{I}$-EDL were obtained from \citet{deng2023uncertainty}.}
\label{OOD_DETECTION_AUROC}
\vskip 0.15in
%\begin{center}
\begin{small}
\begin{sc}
\resizebox{\columnwidth}{!}{%
\begin{tabular}{@{}lcccccccc@{}}
\toprule
 &
  \multicolumn{2}{c}{\textbf{MNIST $\rightarrow$ KMNIST}} &
  \multicolumn{2}{c}{\textbf{MNIST $\rightarrow$ FMNIST}} & 
  \multicolumn{2}{c}{\textbf{CIFAR-10 $\rightarrow$ SVHN}} &
  \multicolumn{2}{c}{\textbf{CIFAR-10 $\rightarrow$ CIFAR-100}} \\ \midrule
& \textbf{Alea.}   & \textbf{Epis.}   & \textbf{Alea.}    & \textbf{Epis.}   & \textbf{Alea.}    & \textbf{Epis.}   & \textbf{Alea.}   & \textbf{Epis.}  \\ \midrule
EDL & 96.59 $\pm$ 0.6 & 96.18 $\pm$ 1.3 & 96.49 $\pm$ 0.8 & 96.22 $\pm$ 1.3 & 80.64 $\pm$ 4.2 & 81.06 $\pm$ 4.5 & 80.96 $\pm$ 0.8 & 80.63 $\pm$ 1.0  \\
$\mathcal{I}$-EDL & 98.00 $\pm$ 0.3 & 97.97 $\pm$ 0.3 &  97.99 $\pm$ 0.3 & 97.97 $\pm$ 0.3 &   87.58 $\pm$ 2.0 & 86.79 $\pm$ 1.3 &  83.55 $\pm$ 0.7 & 82.15 $\pm$ 0.5 \\ \midrule
\textbf{DAEDL} & \textbf{99.88 $\pm$ 0.0} & \textbf{99.90 $\pm$ 0.0}  & \textbf{99.77 $\pm$ 0.1}  & \textbf{99.82 $\pm$ 0.0}  & \textbf{89.10 $\pm$ 1.0} & \textbf{89.24 $\pm$ 1.0} & \textbf{85.94 $\pm$ 0.1} & \textbf{86.04 $\pm$ 0.1}  \\ \bottomrule
\end{tabular}
}
\end{sc}
\end{small}
%\end{center}
\vskip -0.15in
\end{table*}

\subsection{Additional Results in Distribution Shift Detection (\cref{Exp : Distribution Shift Detection})}
\label{Appendix : Distribution Shift Detection}
\paragraph{MNIST $\rightarrow $ MNIST-C} \cref{Corrupted Data Detection - MNIST} demonstrates the results for the distribution shift detection task on the MNIST-C dataset. The results with two different score metrics (AUPR and AUROC) and uncertainty measures (aleatoric and epistemic) are provided. We can observe that DAEDL outperforms the competitors by a significant margin regardless of the score metric and uncertainty measures. 

\begin{table*}[hbtp]
\caption{AUPR and AUROC scores of distribution shift detection based on aleatoric and epistemic uncertainty estimates for the MNIST-C dataset. Alea.AUPR indicates that aleatoric uncertainty is employed as an uncertainty measure while AUPR is utilized as a performance metric. Similarly, Alea.AUROC denotes the results with aleatoric uncertainty and AUROC. In addition, EPIS.AUPR and EPIS.AUROC denote the results obtained by employing epistemic uncertainty while utilizing AUPR and AUROC as a performance metric, respectively.}
\vskip 0.15in
\label{Corrupted Data Detection - MNIST}
\begin{center}
\begin{small}
\begin{sc}
\begin{tabular}{@{}lllll@{}}
\toprule
& \textbf{Alea. AUPR} & \textbf{Epis. AUPR} & \textbf{Alea. AUROC} & \textbf{Epis. AUROC}\\ \midrule
MSP& 78.54 $\pm$ 0.3 & \qquad - & 79.99 $\pm$ 0.4 & \qquad - \\
EDL   & 82.75 $\pm$ 0.8 & 82.75 $\pm$ 0.8 & 82.59 $\pm$ 0.6 & 82.59  $\pm$ 0.6 \\
$\mathcal{I}$-EDL & 86.06 $\pm$ 0.5  & 86.04 $\pm$ 0.5   & 85.83 $\pm$ 0.7  & 85.80 $\pm$ 0.7  \\ \midrule 
\textbf{DAEDL} & \textbf{92.43 $\pm$ 0.3} & \textbf{92.51 $\pm$ 0.3} & \textbf{91.84 $\pm$ 0.3} & \textbf{91.99 $\pm$ 0.3} \\ \bottomrule
\end{tabular}
\end{sc}
\end{small}
\end{center}
\vskip -0.15in
\end{table*}

\paragraph{CIFAR-10 $\rightarrow$ CIFAR-10-C} 
\cref{CIFAR-10-C_AVG} demonstrates four sub-figures that represent the results of the distribution shift detection task. The four sub-figures differ by the uncertainty measure (aleatoric and epistemic) and the performance metric (AUPR and AUROC). The results indicate that DAEDL consistently outperforms the competitors regardless of the score and uncertainty metrics. Notably, the performance gap between DAEDL and the competitors is higher when aleatoric uncertainty was applied as an uncertainty measure. As aleatoric uncertainty corresponds to the inherent uncertainty within the data, it is a more suitable measure for capturing the distribution shift that occurred in the data. Therefore, the results align with our expectations. 

\begin{figure}[hbtp]
    \centering
    \begin{subfigure}{}
        \centering
        \includegraphics[scale = 0.5]{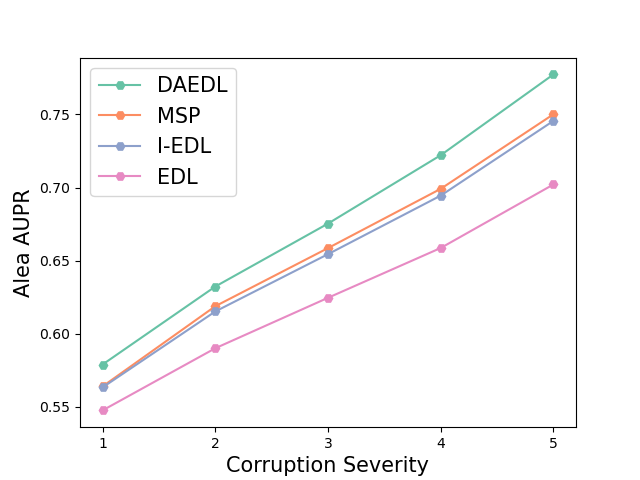}
        \label{Alea AUPR}
    \end{subfigure}
    \hfill
    \begin{subfigure}{}
        \centering
        \includegraphics[scale = 0.5]{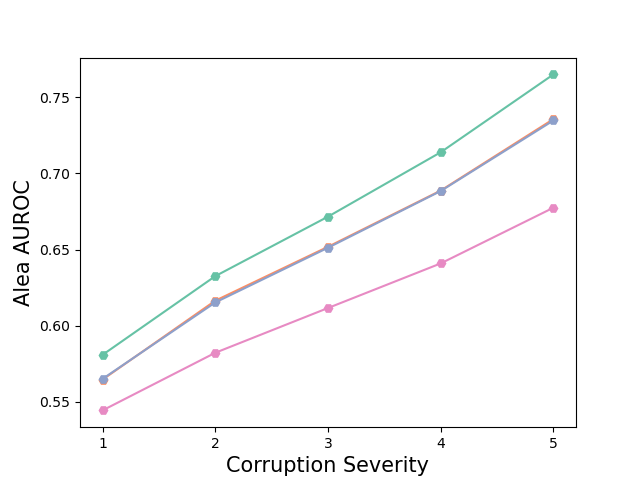}
        \label{Alea AUROC}
    \end{subfigure}

    \medskip

    \begin{subfigure}{}
        \centering
        \includegraphics[scale = 0.5]{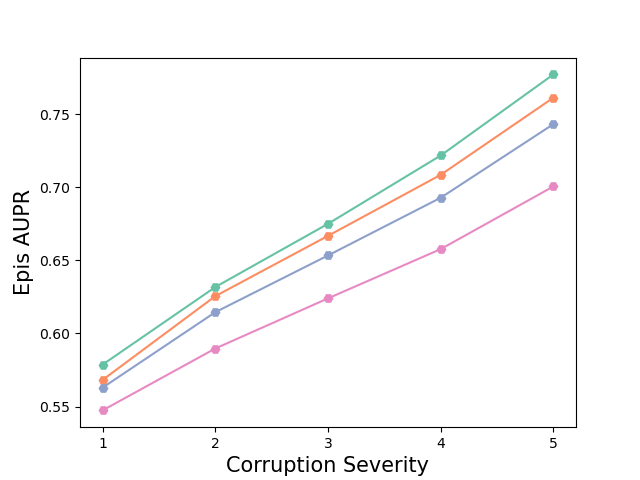}
        \label{Epis AUPR}
    \end{subfigure}
    \hfill
    \begin{subfigure}{}
        \centering
        \includegraphics[scale = 0.5]{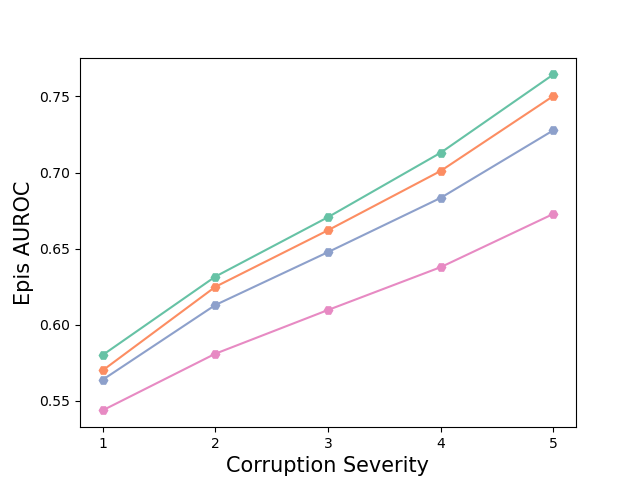}
        \label{Epis AUROC}
    \end{subfigure}

    \caption{The plot shows the average scores for distribution shift detection in the CIFAR-10-C dataset. The scores were obtained by averaging over 5 independent runs. The score for each run was obtained by averaging over 19 different corruptions. The left-side figures show the results using AUPR, while the right-side figures present the outcomes with AUROC. Additionally, the top row displays the results obtained using aleatoric uncertainty, and the bottom row features the results obtained using epistemic uncertainty.}
    \label{CIFAR-10-C_AVG}
\end{figure}

\paragraph{Results for various corruption types on CIFAR-10 $\rightarrow$ CIFAR-10-C} \cref{Alea_AUPR_COMBINED}, \cref{Alea_AUROC_COMBINED}, \cref{Epis_AUPR_Combined}, and \cref{Epis_AUROC_Combined} demonstrate the graphical illustrations of the results for 19 different corruptions. The four figures differ by the uncertainty measure (aleatoric and epistemic) and the performance metric (AUPR and AUROC). In each figure, 19 sub-figures differ by the type of corruption applied. The list of corruptions includes Gaussian noise, shot noise, impulse noise, defocus blur, glass blur, motion blur, zoom blur, snow, frost, fog, brightness, contrast, elastic transform, pixelate, jpeg compression, speckle noise, Gaussian blur, splatter, and saturate. From the figures, we can observe that DAEDL outperformed the competitors in most of the corruption types and severity levels. 

\begin{figure}[hbtp]
\includegraphics[scale = 0.22]{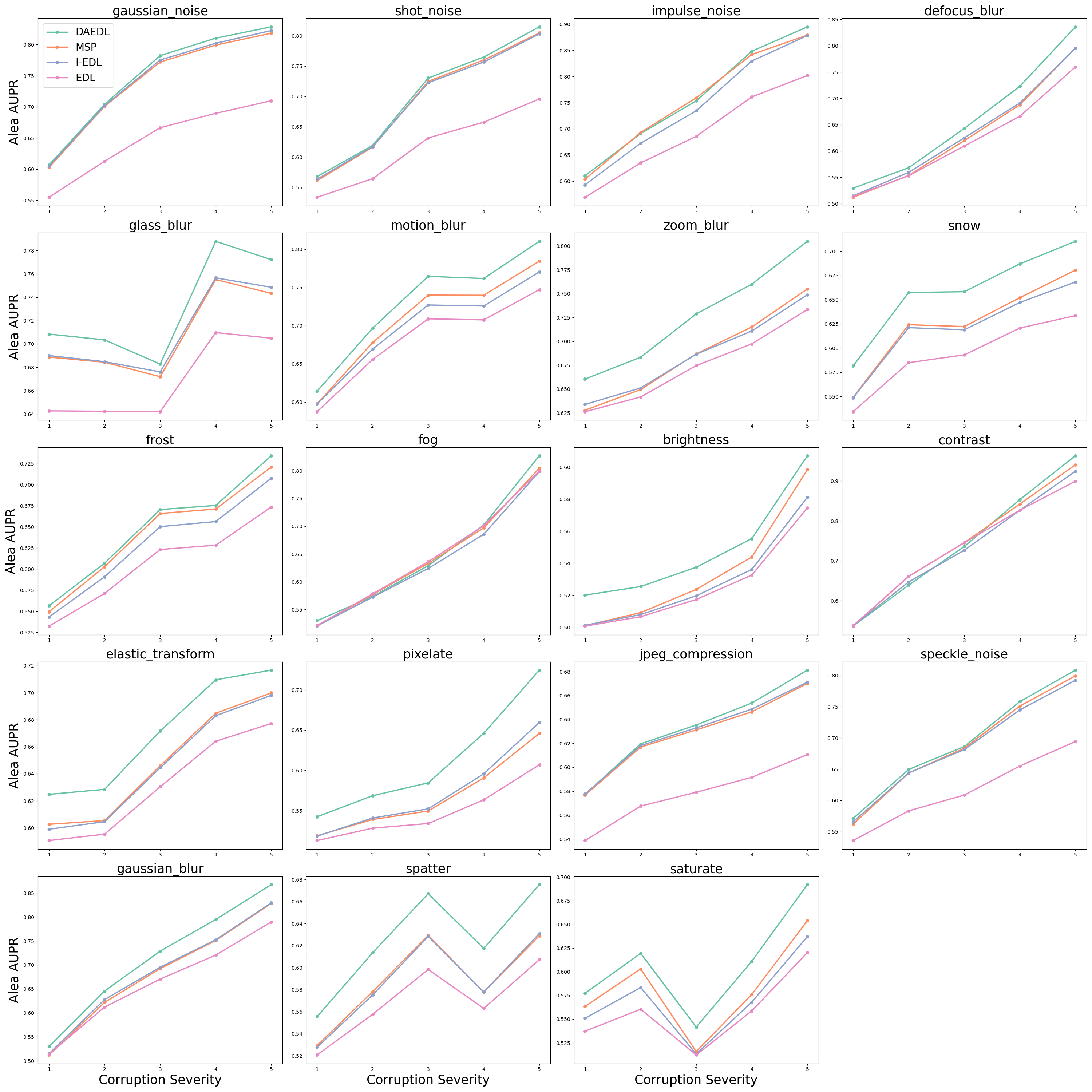}
\caption{AUPR scores for distribution shift detection using aleatoric uncertainty estimates across 19 different corruptions in the CIFAR-10-C dataset}
\label{Alea_AUPR_COMBINED}
\end{figure}

\noindent
\begin{figure}[hbtp]
\includegraphics[scale = 0.22]{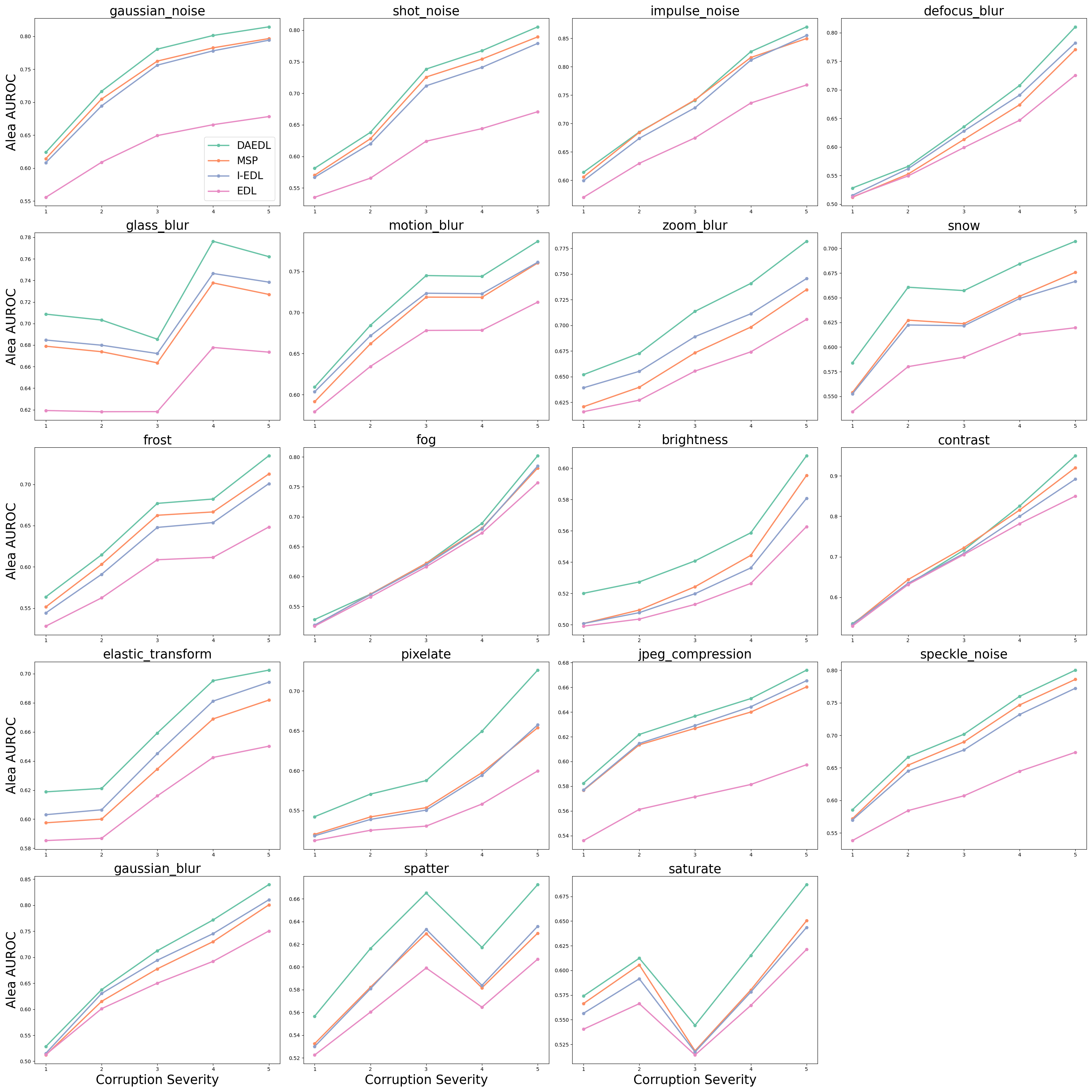}
\caption{AUROC scores for distribution shift detection using aleatoric uncertainty estimates across 19 different corruptions in the CIFAR-10-C dataset}
\label{Alea_AUROC_COMBINED}
\end{figure}

\noindent
\begin{figure}[hbtp]
\includegraphics[scale = 0.22]{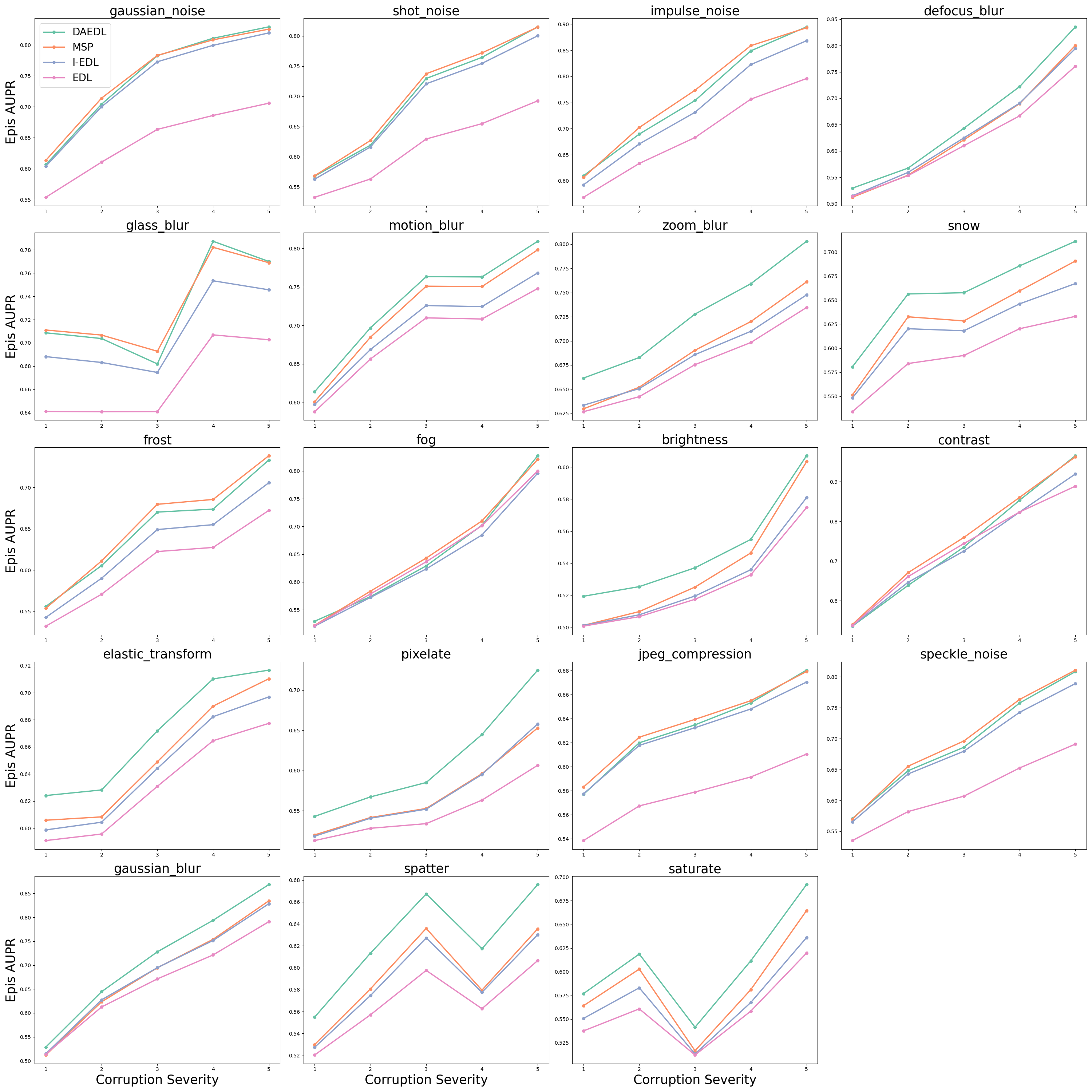}
\caption{AUPR scores for distribution shift detection using epistemic uncertainty estimates across 19 different corruptions in CIFAR-10-C dataset.}
\label{Epis_AUPR_Combined}
\end{figure}

\noindent
\begin{figure}[hbtp]
\includegraphics[scale = 0.22]{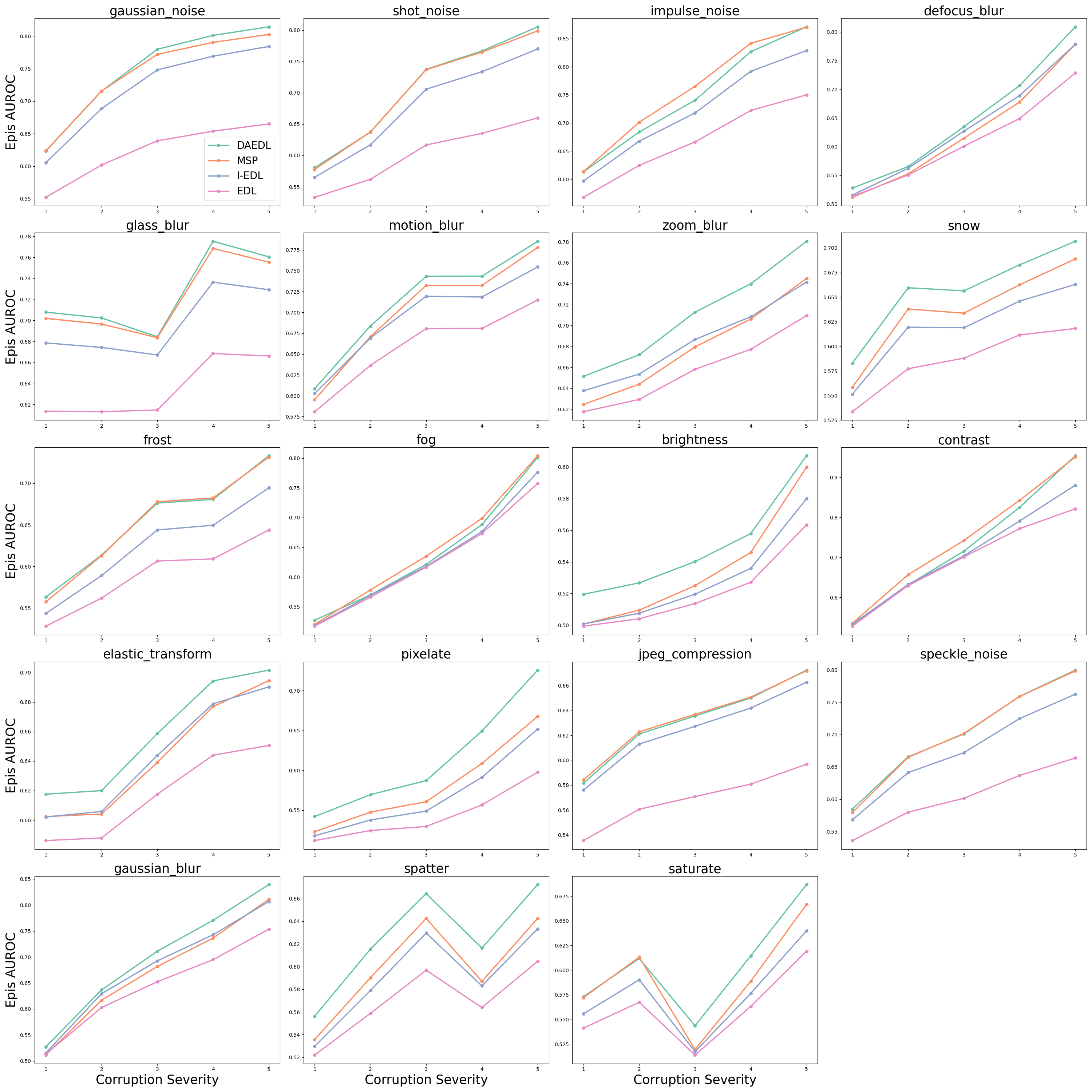}
\caption{AUROC scores for distribution shift detection using epistemic uncertainty estimates across 19 different corruptions in the CIFAR-10-C dataset}
\label{Epis_AUROC_Combined}
\end{figure}

\newpage
\subsection{Additional Results in Ablation Study (\cref{Exp : Ablation Study})}
\label{Appendix : Ablation Study}

\cref{Ablation Study - MNIST} presents the results of the ablation study conducted on the MNIST dataset. We can observe that each component of our model is effective individually and collectively demonstrates a synergistic effect, thereby enhancing the performance of EDL.

\begin{table*}[hbtp]
\caption{Ablation study results on MNIST. 
The results of EDL (DAEDL without EXP, DE, and SN) were from  \citet{deng2023uncertainty}}
\label{Ablation Study - MNIST}
\vskip 0.15in
\begin{center}
\begin{small}
\begin{sc}
\begin{tabular}{@{}lcc|ccccccc@{}}
\toprule
 &  &  & \multicolumn{2}{c}{\textbf{MNIST $\rightarrow$ KMNIST}} & \multicolumn{2}{c}{\textbf{MNIST $\rightarrow$ FMNIST}} \\ \midrule
\textbf{Exp} & \textbf{DE} & \textbf{SN} & \textbf{Alea.} & \textbf{Epis.} & \textbf{Alea.} & \textbf{Epis.} \\ \midrule
 \xmark & \xmark & \xmark & 97.02 $\pm$ 0.8 & 96.34  $\pm$ 2.0 & 98.10  $\pm$ 0.4 & 98.08  $\pm$ 0.4 \\ \midrule
\cmark & \xmark & \xmark & 98.86 $\pm$ 0.0 & 98.89 $\pm$ 0.0 & 99.35 $\pm$ 0.0 & 99.48 $\pm$ 0.0 \\
\cmark & \cmark & \xmark & 99.74 $\pm$ 0.0 & 99.76 $\pm$ 0.0 & 99.65 $\pm$ 0.1 & 99.67 $\pm$ 0.1 \\ 
 \cmark & \xmark & \cmark &  98.86 $\pm$ 0.1 & 98.90 $\pm$ 0.1  & 99.40 $\pm$ 0.1 & 99.50 $\pm$ 0.1 \\ \midrule
\cmark & \cmark & \cmark &   \textbf{99.90 $\pm$ 0.0} & \textbf{99.92 $\pm$ 0.0} & \textbf{99.83 $\pm$ 0.0} & \textbf{99.87 $\pm$ 0.0} \\ \bottomrule
\end{tabular} \\
\end{sc}
\end{small}
\end{center}
\vskip -0.15in
\end{table*}

\section{Additional Explanations about the Concepts}
To enhance the self-contained nature of our paper, we have included supplementary explanations for the concepts utilized in our study.
\paragraph{Dirichlet Distribution.}
The probability density function (PDF) of a Dirichlet distribution is formulated as follows:
\begin{align*}
\Dir(\boldsymbol{\pi}|\boldsymbol{\alpha}) = \frac{\Gamma(\sum_{c=1}^{C} \alpha_{c})}{\prod_{c=1}^{C} \Gamma(\alpha_{c})} \prod_{c=1}^{C} \pi_{c}^{\alpha_{c-1}},
\end{align*}
where $\boldsymbol{\pi} \in \Delta^{C-1}$ and $\boldsymbol{\alpha} = [\alpha_{1}, \alpha_{2}, \cdots, \alpha_{C}], \forall \alpha_{c} > 0$ being the concentration parameters. Here, the expected probability for the $c$th class can be calculated as follows:
\begin{align*}
\bar{\pi}_{c} &= \mathbb{E}_{\boldsymbol{\pi} \sim \Dir(\boldsymbol{\alpha)}}[\pi_{c}] = \frac{\alpha_{c}}{\alpha_{0}}.
\end{align*}
In this paper, we utilize $\max_{c} \bar{\pi}_{c}$ (i.e., maximum expected class probability) and $\alpha_{0}$ (i.e., precision of the Dirichlet distribution) as the measures of aleatoric and epistemic uncertainty, respectively. In addition to these metrics, DBU models provide various types of uncertainty measures in a closed form. For more comprehensive analysis and derivation of such measures, please refer to \citet{ulmer2023prior}. 

\paragraph{Spectral Normalization.}
\label{Appendix : Spectral Normalization}
Spectral normalization \cite{miyato2018spectral} is a technique that is applied to ensure a regularized feature space. It has been widely utilized in the context of single forward pass uncertainty estimation models. 
Specifically, spectral normalization is applied by estimating the spectral norm of the weight matrices and dividing the weight matrices by their spectral norm. 
Suppose that $f_{\boldsymbol{\theta}}$ is an $L$-layer neural network with weight matrices $\boldsymbol{\theta} = \{W^{(l)}\}_{l=1}^{L}$. We estimate the spectral norm $\sigma(W^{(l)})$ of the weight matrices for each layer using the power iteration method \cite{miyato2018spectral, gouk2021regularisation} and normalize the weights by dividing it by the corresponding spectral norm as follows:
\begin{align*}
\quad W^{(l)} \leftarrow \frac{W^{(l)}}{\sigma(W^{(l)})}, \quad \sigma(W^{(l)}) = \underset{h : h \neq 0}{\max} \frac{\lVert W^{(l)}h\rVert_{2}}{\lVert h \rVert_{2}}.
\end{align*}
For the updated $f_{\boldsymbol{\theta}}$ with the spectral normalized weight, the Lipschitz norm $\lVert f_{\boldsymbol{\theta}} \rVert_{\Lip} $ is bounded above by 1. Thus, we can ensure that the feature space distance is bounded by the input space distance. In other words, inequality $\lVert f_{\boldsymbol{\theta}}(\boldsymbol{x}_{1}) - f_{\boldsymbol{\theta}}(\boldsymbol{x}_{2}) \rVert_{2} \leq \lVert \boldsymbol{x}_{1} - \boldsymbol{x}_{2} \rVert_{2}$ holds $\forall \boldsymbol{x}_{1}, \boldsymbol{x}_{2} \in \mathbb{R}^{D}$. This prevents the feature representations from being overly sensitive to the meaningless perturbation in the input space and ensures the representation to be more informative \cite{liu2020simple}. 

In this paper, we utilize spectral normalization to obtain a regularized feature space that enables meaningful feature space density estimates with GDA. 

\paragraph{Distance Awareness.}
\label{Appendix : distance awareness}
Distance awareness is a beneficial property that ensures the model to obtain high-quality uncertainty estimates. It has been widely utilized in the context of single forward pass uncertainty estimation models. Formally, distance awareness is defined as \cref{distance awareness}

\begin{definition} 
\label{distance awareness}
\textbf{(Distance Awareness) \cite{liu2020simple}} The predictive distribution $p(y|\mathbf{x})$ is \textit{distance aware} if there exist $u(\mathbf{x})$, a summary statistic of $p(y|\mathbf{x})$ that quantifies model uncertainty that reflects distance between $\mathbf{x}$ and the training data with respect to $\lVert \rVert_{\mathcal{X}}$ i.e., $u(\mathbf{x}) = v(d(\mathbf{x},\mathcal{X}))$, where $v$ is a a monotonic function and $d(\mathbf{x},\mathcal{X}) = \mathbb{E}_{\mathbf{x}' \sim \mathcal{X}}\lVert \mathbf{x}-\mathbf{x}' \rVert_{\mathcal{X}}^{2}$ is the distance between $\mathbf{x}$ and the training data domain. 
\end{definition} 
In this paper, we prove that DAEDL satisfies \textit{distance awareness} with respect to both the feature space and input space under certain conditions (\cref{Feature Distance Awareness}, \cref{Input Distance Awareness}). 
\end{document}